\newlength\figureheight
\newlength\figurewidth
\newtheorem{thm}{Theorem}
\newtheorem{defn}{Definition}
\theoremstyle{remark}
\newcommand{\be}{\begin{equation}}
\newcommand{\ee}{\end{equation}}
\newcommand{\bea}{\begin{eqnarray}}
\newcommand{\eea}{\end{eqnarray}}
\newcommand{\ben}{\begin{enumerate}}
	\newcommand{\een}{\end{enumerate}}
\newcommand{\beseq}{\begin{subequations}}
	\newcommand{\eeseq}{\end{subequations}}
\begin{document}
\title{Distributed Power Control for Large Energy Harvesting Networks: A Multi-Agent Deep Reinforcement Learning Approach \thanks{The work in this paper will appear in part at IEEE ICASSP 2019~\cite{MSharma_ICASSP_2019} and IEEE WiOpt 2019~\cite{MSharma_WiOpt_2019}. Mohit~K.~Sharma, Mohamad Assaad, and M\'erouane Debbah are with the CentraleSupelec, Universit\'e Paris-Saclay, 91192 Gif-sur-Y vette, France. (e-mails: \{mohitkumar.sharma, mohamad.assaad\}@centralesupelec.fr. A.\! Zappone was with CentraleSupelec, Gif-Sur-Yvette, France, and is now with the University of Cassino and Southern Lazio, Cassino, Italy (email: alessio.zappone@unicas.it). M\'erouane Debbah and Spyridon Vassilaras are  with the Mathematical and Algorithmic Sciences Lab, Huawei France R\&D, Paris, France (e-mails: \{merouane.debbah, spyros.vassilaras\}@huawei.com). This research has been partly supported by the ERC-PoC 727682 CacheMire project. The work of A. Zappone was supported by the H2020 MSCA IF BESMART, Grant 749336.}
	}

%\vspace{-0.1in}
\author{\IEEEauthorblockN{Mohit~K.~Sharma, Alessio Zappone,~\IEEEmembership{Senior Member,~IEEE}, Mohamad~Assaad,~\IEEEmembership{Senior Member,~IEEE}, M\'erouane~ Debbah~\IEEEmembership{Fellow,~IEEE}, and Spyridon~Vassilaras~\IEEEmembership{Senior Member,~IEEE}}}

\maketitle
%\vspace{-0.3in}
\begin{abstract}
In this paper, we develop a multi-agent reinforcement learning (MARL) framework to obtain online power control policies for a \emph{large} energy harvesting (EH) multiple access channel, when only causal information about the EH process and wireless channel is available. In the proposed framework, we model the online power control problem as a discrete-time mean-field game (MFG), and analytically show that the MFG has a \emph{unique} stationary solution. Next, we leverage the fictitious play property of the mean-field games, and the deep reinforcement learning technique to learn the stationary solution of the game, in a completely  \emph{distributed} fashion. We analytically show that the proposed procedure converges to the unique stationary solution of the MFG. This, in turn, ensures that the optimal policies can be learned in a completely distributed fashion. In order to benchmark the performance of the distributed policies, we also develop a deep neural network (DNN) based centralized as well as distributed online power control schemes. Our simulation results show the efficacy of the proposed power control policies. In particular, the DNN based centralized power control policies provide a very good performance for large EH networks for which the design of optimal policies is intractable using the conventional methods such as Markov decision processes. Further, performance of both the distributed policies is close to the throughput achieved by the centralized policies.    	
\end{abstract}
%\IEEEpeerreviewmaketitle
\newcounter{MYtempeqncnt}

%\vspace{-15pt}
\section{Introduction}
Internet-of-things (IoT) \cite{Centenaro_MWC_Oct2016} networks connect a large number of low power sensors whose lifespan is typically limited by the energy that can be stored in their batteries. In this context, the advent of the energy harvesting (EH) technology \cite{Ku_Comm_Tuts_2016} promises to prolong the lifespan of IoT networks by enabling the nodes to operate by harvesting energy from environmental sources, e.g., the sun, the wind, etc. However, this requires the development of new energy management methods. This is because an EH node (EHN) operates under the energy neutrality constraint which requires that the total energy consumed by the node up to any point in time can not exceed the total amount of energy harvested by the node until that point. This constraint is particularly challenging due to the random nature of environmental energy sources. In particular, the evolution over time of the intensity of the sun or wind is a random process, and thus the amount of energy that can be harvested at any given instant can not be deterministically known in advance. In addition, at a given instant, an EHN can only store an amount of energy equal to its battery capacity. Therefore, a major and challenging issue in a EH-based IoT systems is to devise power control policies to maximize the communication performance under the aforementioned constraints. 

Available approaches for power control in EH-based wireless networks can be divided into two main categories: \emph{offline} and \emph{online} approaches. Offline approaches consider a finite time-horizon over which the optimal power control policy has to be designed, and assume that perfect information about the energy arrivals and channel states is available over the entire time-horizon \cite{kaya_TWC_March2012, Wang_JSAC_Mar2015}, before the start of operation. Under these assumptions, the power control problem can be formulated as a static optimization problem aimed at optimizing a given performance metric (e.g. system sum-rate, communication latency), and can be tackled by traditional optimization techniques. However, in general, offline approaches are not practically implementable because they require non-causal knowledge about the energy arrivals and propagation channels. For this reason, offline solutions are mostly considered for benchmarking purposes only. 

In contrast to offline policies, online approaches target the optimization of the system performance over a longer, possibly infinite, time-horizon, and assume that only previous and present energy arrivals and channel states are known \cite{MSharma_TWC_June2018, Baknina_TC_2018}. As a result, the power allocation problem becomes a stochastic control problem, which, upon discretizing the state space (battery state and channel gains), can be formulated as a Markov decision processes (MDP) \cite{Bertesekas-eta-al-2014}, for which optimal long-term policy can be determined numerically. However, these techniques require perfect knowledge of the statistics of the EH process and of the propagation channels, which are difficult to know in practice. In order to address this drawback, the framework of reinforcement learning (RL) \cite{Blasco_TWC_April2013, Ortiz_ICASSP_2018, Nikhilesh_ArXiv_2018, wu_icc_2017, Toorchi_ICIP_sept2016, chu_JIoT_2018, Masadeh_icc_2018, Wei_TWC_Jan2018, xiao_icc_June2015} or that of Lyapunov optimization \cite{Huang_CDC_2015, Yu_arxiv_2018, Gatzianas_TWC_Feb2010} have been proposed to find approximate solutions. All of these previous works take a centralized approach, in which typically the whole network is modeled as a single MDP whose solution provides the optimal long-term power allocation policy for all network nodes. However, this approach is not suitable for large networks, as the presence of a large number of nodes causes inevitable feedback overheads, and more importantly the resulting MDP is often intractable. Indeed, numerical solution techniques for the MDPs suffer from the so-called ``curse-of-dimensionality'' which makes them computationally infeasible.

Therefore, in absence of any a-priori knowledge about the EH process and the channel, it is essential to develop new techniques which can aid in learning the online policies for large EH-based networks, in a \emph{distributed} fashion. A fully distributed approach to online power control will obviate the need for any information exchange between the nodes. Learning distributed power control policies for EH networks have been recently considered in only a handful of works \cite{Miozzo_wcnc_2017, Wang_ICC_2018, Ortiz_ArXiv_2017, Hakami_TVT_June2017}.

%In , the authors develop the distributed online policies to optimize delay \cite{Wang_ICC_2018, Hakami_TVT_June2017} and throughput\cite{Miozzo_wcnc_2017, Ortiz_ArXiv_2017}. 
In \cite{Miozzo_wcnc_2017}, the authors use a distributed Q-learning algorithm where each node independently learns its individual Q-function. However, the proposed method is not guaranteed to converge, since each individual node experiences an inherently non-stationary environment \cite{Lowe_NIPS2017}. In \cite{Wang_ICC_2018}, a distributed solution is developed to minimize the communication delay in EH-based large networks, assuming the information about the statistics of the EH process and of the wireless channel are known. Interestingly, the interactions among the devices are modeled as a mean-field game (MFG), a framework specifically conceived to analyze the evolution of systems composed of a very large number of distributed decision-makers \cite{Hanif_ACM_Feb2016, Larranaga_isit_june2018, Wang_TWC_MAr2014}. A multi-agent reinforcement learning (MARL) approach is considered in \cite{Ortiz_ArXiv_2017}, where an online policy for sum-rate maximization is developed. However, the approach in \cite{Ortiz_ArXiv_2017} assumes that the global system state is available at each node, which renders it infeasible for large EH networks, due to the extensive signaling required to feedback the global system state to all network nodes. In \cite{Hakami_TVT_June2017}, a two-hop network with EH relays is considered, and a MARL-based algorithm with guaranteed convergence is proposed to minimize the communication delay.

The objective of this work is to develop a mechanism to learn optimal online power control policies in a distributed fashion, for a \emph{fading-impaired} multiple access channel (MAC) with a large number of EH transmitters. The authors in \cite{Wang_JSAC_Mar2015} derived a throughput-optimal \emph{offline} power control policy for a fading EH MAC, which is designed in a centralized fashion. In \cite{Yang_ISIT_2015,Blasco_Mar_JSAC2015}, centralized online policies are developed under the simplifying assumptions of binary transmit power levels, and batteries with infinite or unit-size capacity. Optimal online power control policies for fading EH MAC are not available in the literature, even in a centralized setting. In order to design a centralized power control policy we build upon the recent advances in the deep learning \cite{Zappone_TC_Oct2019}. 
In particular, our main contributions are the following:

\begin{itemize}
	\item  First, to benchmark the performance of the distributed policies, we develop a deep neural network (DNN) based centralized online policy which uses a DNN to map a system state to transmit power. 
	\item We model the problem of throughput maximization for a fading EH MAC as a discrete-time MFG, and exploiting the structure of the problem we show that the MFG has \emph{unique stationary solution}.   
	\item Next, we leverage the fictitious play property of MFGs and develop a deep reinforcement learning (DRL) based approach  to learn the stationary solution of the MFG. Under the proposed scheme, each node apply the DRL, individually, to learn the optimal power control in a completely distributed fashion, without any apriori knowledge about the statistics of the EH process and the channel. 
	%\item Another major novelty of our work lies in the use of the deep reinforcement learning (DRL) tool to deal with the extremely large dimension of the state space. Specifically, DRL provides a convenient way of  implementing reinforcement learning techniques (Q-learning), by leveraging the universal function approximation property of deep neural networks (DNNs) \cite{Li2017_DRL}. To date, deep reinforcement learning for power control has been addressed in very few works. In \cite{Fang_Arxiv_2018} DRL is used for power control in a cognitive radio system, in which the secondary user learns how to adjust its transmit power in order to guarantee the QoS of the primary system. Power control by DRL is also considered in \cite{Nasir2018} with reference to sum-rate maximization in wireless interference networks. Instead, the use of DRL in EH-based systems has never been considered before.
	%\item We provide a study of the structural properties of the proposed MARL approach, and exploit them to design a second MARL-based algorithm, with faster convergence. \textbf{Add more description here.}
	\item Furthermore, we adapt the DNN based centralized approach to design an energy efficient distributed online power control policy. %Moreover, the proposed online centralized approach provides a benchmark to test the performance of the proposed distributed policies. 
	\item Extensive numerical results are provided to analyze the performance of the proposed schemes. Our results illustrate that the throughput achieved by DNN based centralized policies is close to the throughput achieved by the offline policies. Moreover, the policies learned using the proposed mean-field MARL approach achieve throughput close to centralized policies. 
\end{itemize}

In contrast to earlier work \cite{Miozzo_wcnc_2017, Wang_ICC_2018}, our algorithm is \emph{provably convergent} and does not require any knowledge about the statistics of the EH process and of the wireless channels. In order to learn the optimal power control policy, each node only needs to know the state of its own channel and battery. The performance of the resulting online policies is very close to offline policies which exploit non-causal information. We note that our work is the first in the literature that uses the multi-agent deep reinforcement learning to obtain the optimal power control in large EH networks. 

The rest of the paper is organized as follows. In the following section, we describe the system model and the problem formulation. In Sec.~\ref{sec:SOlution_Method}, we design DNN based centralized online power control policies. Next, In Secs.~\ref{Sec:mean_field_MARL} and \ref{sec:MF_MARL_Algo}, we model the throughput maximization problem as a discrete-time finite state MFG and present our mean-field MARL approach to learn the distributed power control policy, respectively. In Sec.~\ref{sec:energy_compare}, we analyze the energy cost incurred on the implementation of the proposed algorithms, and also propose an energy efficient distributed DNN based algorithm. Simulation results are presented in Sec.~\ref{sec:sim}, and conclusions in Sec~\ref{sec:Concl}. 

\section{System Model and Problem Formulation}
\label{sec:sys}
We consider a time-slotted EH network, where a large number of \emph{identical} EHNs transmit their data over block fading channels to an access point (AP) which is connected to the mains. The set of transmitters is denoted by $\mathcal{K}\triangleq\{1,2,\ldots,K\}$, where $K\gg 1$ denotes the number of EHNs. %Without loss of generality, each slot is assumed to be of unit length and the nodes are synchronized at the slot-level. Each transmission lasts for the entire slot duration\cite{Kapoor_WCL_feb_2017, Liu_TVT_Apr2016}. 
In the $n^{\text{th}}$ slot, the \emph{fading} complex channel gain between the $k^{\text{th}}$ transmitter and the AP is denoted\footnote{For any symbol in the paper, the superscript and subscript represent the node index and the slot index, respectively, and if only the subscript is present then it denotes either the node index or the slot index, depending on the context.} by $g_n^k\in \mathbf{G}_k$. In each slot, the channel between any transmitter and the AP remains constant for the entire slot duration, and changes at the end of the slot, independently of the channel in the previous slot. We assume that the  wireless channels between the nodes and the AP, $\mathbf{G}_k$, are identically distributed. 
\begin{figure}[t!]
	\centering
	\includegraphics[width=2.5in]{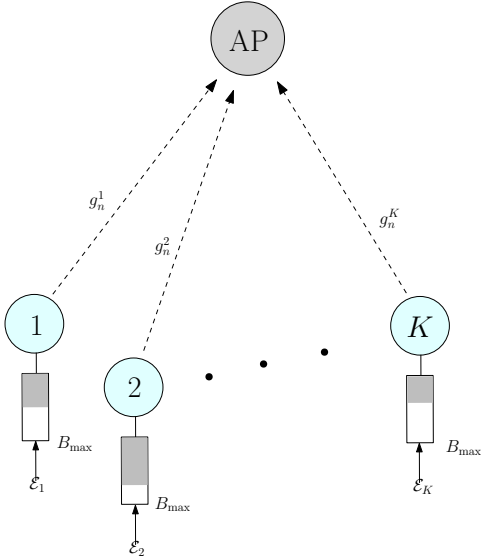}
	\caption{System model for the fading impaired EH multiple access network. The EH process and battery size at the $k^{\text{th}}$ node are denoted by $\mathcal{E}_k$ and $B_{\max}$, respectively. The nodes transmit their data to the AP over block fading channels. The gain of the fading complex channel from the transmitter $k$ to the AP, in the $n^{\text{th}}$ slot, is denoted by $g_n^k$.}
	\label{Fig:sys_mod}
\end{figure}

In a slot, the $k^{\text{th}}$ node harvests energy according to a general stationary and ergodic harvesting process  $f_{\mathcal{E}_k }(e_k)$, where the random variable $\mathcal{E}_k$ denotes the amount of energy harvested by the $k^{\text{th}}$ transmitter, and $e_k$ denotes a realization of $\mathcal{E}_k$. We assume that the harvesting processes $\{\mathcal{E}_k\}_{k\in\mathcal{K}}$ are identically distributed across the individual nodes, but not necessarily independent of each other. At each node, the harvested energy is stored in a perfectly efficient, finite capacity battery of size $B_{\max}$. Further, only \emph{causal} and \emph{local} information is available, i.e., each node knows only its \emph{own} energy arrivals, battery states, and the channel states to the AP, in the current and all the previous time slots. In particular, no node has information about the battery and the channel state of the other nodes in the network. Also, at any node, no information is available about the distribution of the EH process and of the wireless channels. 

Let $p_n^k\leq P_{\max}$ denote the transmit energy used by the $k^{\text{th}}$ transmitter in the $n^{\text{th}}$ slot, where $P_{\max}$ denotes the maximum transmit energy which is determined by the RF front end of the EHNs. Further, $\mathcal{P}_n\triangleq\{p_n^k\}_{k=1}^K$
denotes the vector of transmit energies used in the $n^{\text{th}}$ slot, by all the transmitters. The battery at the $k^{\text{th}}$ node evolves as  
\be
B_{n+1}^k=\min\{[B_{n}^k+e_n^k-p_n^k]^+, B_{\max}\},
\label{eq:battery_evol}
\ee
where $1\leq k\leq K$, and $[x]^+\triangleq\max\{0,x\}$. In the above, $B_{n}^k$ and $e_n^k$ denote the battery level and the energy harvested by the $k^{\text{th}}$ node at the start of the $n^{\text{th}}$ slot, respectively. 
An upper bound on the successful transmission
rate of the EH MAC over $N$ slots is given by\cite{Wang_JSAC_Mar2015}
\be
\mathcal{T}(\mathcal{P}) = \sum_{n=1}^N\log\left(1+\sum_{k\in\mathcal{K}}p_n^kg_n^k\right),
\label{eq:throughput_finite}
\ee
where $\mathcal{P}\triangleq\{\mathcal{P}_n|1\leq n\leq N\}$. Note that, the above upper bound can be achieved by transmitting independent and identically distributed (i.i.d.) Gaussian signals. In \eqref{eq:throughput_finite}, for simplicity, and without loss of generality, we set the power spectral density of the AWGN at the receiver as unity. \footnote{We note that, in a scenario when all the EHNs simultaneously transmit their data, the cumulative signal-to-noise ratio (SNR) term  in \eqref{eq:throughput_finite}, $\sum_{k\in\mathcal{K}}p_n^kg_n^k$, grows with the number of users in the network. In practice, this problem can be circumvented by ensuring that the transmit power of EHNs scales down in inverse proportion to the number of users, i.e., $O\left(\frac{1}{K}\right)$. This ensures that the total energy in the network stays finite.}   

In the absence of information about the statistics of the EH process and the channel, our goal in this work is to learn online energy management policy at each node to maximize the time-averaged sum throughput. The optimization problem can be expressed as follows
\beseq
\begin{align}
&\max_{\{\mathcal{P}\} } \liminf_{N\to \infty}\frac{1}{N}\mathcal{T}(\mathcal{P}),\\
&\text{s.t. }  0\leq  p_n^k\leq \min\{B_n^k,P_{\max}\}, 
\label{eq:optim_prob_const}
\end{align}
\label{eq:optim_prob}
\eeseq
for all $n$ and $1\leq k\leq K$. Constraint \eqref{eq:optim_prob_const} captures the fact that the maximum energy a node can use in the $n^{\text{th}}$ slot is limited by the minimum between the amount of energy available in the battery, $B_n^k$, and the maximum allowed transmit energy $P_{\max}$. Note that, the information about the \emph{random} energy arrivals and the channel is only \emph{causally available}, and for each node the battery evolves in a Markovian fashion, according to \eqref{eq:battery_evol}. Hence, the optimization problem \eqref{eq:optim_prob} is essentially a stochastic control problem which, upon discretization of the state space, could be formulated as a Markov decision process (MDP). However, solving such an MDP in the considered setting poses at least three major challenges:
\begin{itemize}
	\item Infeasible complexity, since in the considered setup a large number of nodes, $K$, is present in the network.
	\item In each slot, the global information about the battery and channel states, and the value of the harvested energy of \emph{each} network node would be needed for the operation of the policy. Therefore, the feedback overhead in \emph{each} slot is $\mathcal{O}(K)$. For a network with large number of nodes this would result in a significant control overhead.  
	\item Finally, solving the MDP also requires statistical information about the EH process and the wireless channel, which is often difficult to obtain, and indeed is not assumed in this work. 
\end{itemize}
Due to these reasons, the goal of this work is to develop a framework to learn online power control policies in a distributed fashion, i.e., each node learns the optimal online power control policy without requiring to know the battery and channel states, and actions of the other nodes. In the following sections, we develop a provably convergent mean-field  multi-agent reinforcement learning (MF-MARL) approach to distributively learn the throughput-optimal power control policies, leveraging the tools of DRL and MFGs. In the following section, first we present a DNN based centralized power control policy which is used for benchmarking our MF-MARL based distributed solution.

\section{DNN based Centralized Online Power Control Policy}
\label{sec:SOlution_Method}

To describe our DNN based centralized approach to solve the stochastic control problem in \eqref{eq:optim_prob}, we define some additional notations and formally define the online and offline policies in the context of our problem.
\subsection{Notations} For the $k^{\text{th}}$ node, let ${\boldsymbol E}_{m:n}^k\triangleq\{e_m^k, e_{m+1}^k,\ldots,e_n^k\}$, ${\boldsymbol B}_{m:n}^k\triangleq\{B_m^k, B_{m+1}^k,\ldots,B_n^k\}$, and ${\boldsymbol G}_{m:n}^k\triangleq\{g_m^k, g_{m+1}^k,\ldots,g_n^k\}$ denote the vectors containing the values of energy harvested, battery state, and the channel state, respectively, in the slots from $m$ to $n$. Further, history up to the start of slot $n$ is denoted by a tuple $\boldsymbol{H}_n\triangleq\left\{({\boldsymbol E}_{1:n-1}^k, {\boldsymbol B}_{1:n-1}^k, {\boldsymbol G}_{1:n-1}^k)\right\}_{k=1}^K$, where $\boldsymbol{H}_n \in \mathcal{H}_n$, where $\mathcal{H}_n$ is the set of all possible histories up to slot $n$. Also, in the $n^{\text{th}}$ slot the current state of the system is described by the tuple $\boldsymbol{s}_n\triangleq\{\boldsymbol{E}_n,\boldsymbol{B}_n,\boldsymbol{G}_n\}$, where $\boldsymbol{E_n}\triangleq(e_n^1,e_n^2,\ldots,e_n^K)$, $\boldsymbol{B_n}\triangleq(B_n^1,B_n^2,\ldots,B_n^K)$ and $\boldsymbol{G_n}\triangleq(g_n^1,g_n^2,\ldots,g_n^K)$ are the vectors containing the values of energy harvested, battery state, and the channel state, respectively, for all the nodes in the $n^{\text{th}}$ slot. Further, $\boldsymbol{s}_n\in \mathcal{S}$ where $\mathcal{S}$ denotes the set of all the possible states.  

\subsection{Online and Offline Policies}
In the $n^{\text{th}}$ slot, an online decision rule $f_n:\mathcal{H}_n \times \mathcal{S}\to \hat{\mathcal{P}}$ maps the history, $\boldsymbol{H}_n$, and the current state of the system, $\boldsymbol{s}_n$, to a transmit energy vector $\hat{\mathcal{P}}\in \mathbb{R}_+^K$ which contains feasible transmit energies for all the nodes. Mathematically, an \emph{online} policy $\mathcal{F}$ is the collection of decision rules, i.e., $\mathcal{F}\triangleq\{f_1,f_2\ldots\}$. 
In contrast, for \emph{offline} policy design problem the time-horizon, $N$, is finite, and, for all the slots, the information about the amount of the energy harvested and the channel state is available non-causally, i.e., before the start of the operation, for all the slots. Hence, the stochastic control problem in \eqref{eq:optim_prob} reduces to a static optimization problem which is written as
\beseq
\begin{align}
&\hspace{45pt}\max_{\{\mathcal{P}\} } \frac{1}{N}\sum_{n=1}^N\log\left(1+\sum_{k\in\mathcal{K}}p_n^kg_n^k\right),\\
\text{s.t. } & 0\leq  p_n^k\leq \min\{B_n^k,P_{\max}\}  \text{ for all } n, \text{ and } 1\leq k\leq K.
\label{eq:optim_prob_const_offline}
\end{align}
\label{eq:optim_prob_offline}
\eeseq
Note that, since $N$ is finite, and the realizations of the EH processes and the channel states are known non-causally, i.e., $E_{1:N}^k$ and $G_{1:N}^k$ are known at the start of the operation, for all the nodes, the objective and constraints in \eqref{eq:optim_prob_offline} are deterministic convex functions in the optimization variables $p_k^n$. Hence, the  offline policy design problem in \eqref{eq:optim_prob_offline} is a convex optimization problem which can be solved efficiently using the iterative algorithm presented in \cite{Wang_JSAC_Mar2015}, with per iteration complexity equal to $\mathcal{O}\left(KN^2\right)$. The following section presents our approach to obtain the DNN based online energy management policies which, in general, can also be used for solving a stochastic control problem using the solution of an offline optimization problem.  
\begin{figure}[t!]
	\centering
	\includegraphics[width=3in]{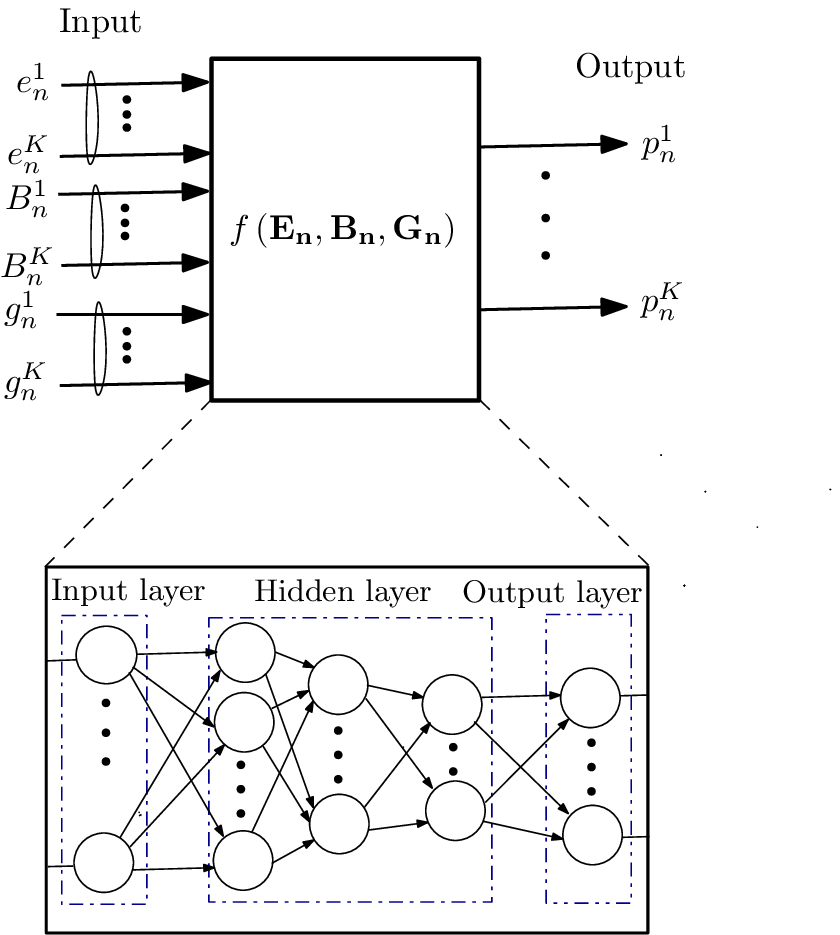}
	\caption{DNN based centralized online power control. In the $n^{\text{th}}$ slot, the DNN maps the system state $(\mathbf{E_n,B_n,G_n})$ to a feasible transmit energy vector $\hat{\mathcal{P}}$ which contains the transmit energies of all the transmitters of the EH MAC. Here, the system state is denoted by the collection of the values of the energy harvested by all the nodes, $\boldsymbol{E_n}\triangleq\{e_n^1,e_n^2,\ldots,e_n^K\}$, battery states of all the nodes, $\boldsymbol{B_n}\triangleq\{B_n^1, B_n^2,\ldots,B_n^K\}$, and the state of channels between all the transmitters and the AP, $\boldsymbol{G_n}\triangleq\{g_n^1, g_n^2,\ldots,g_n^K\}$. Thus, the DNN takes $3K$ inputs which are then processed by $L$ hidden layers and one output layer to output a $K$ length feasible transmit power vector.  }
	\label{Fig:DNN_arch}
\end{figure}

\subsection{DNN based Online Energy Management} 
To obtain online energy management policy, we first note that due to finite state and action space of the problem, the optimal policy for the problem \eqref{eq:optim_prob} is a Markov deterministic policy\cite[Thm. 8.4.7]{Puterman_MDP_2014}, i.e., $\mathcal{F}\triangleq\{f,f\ldots\}$ where $f:\mathcal{S}\to \hat{\mathcal{P}}$. Hence, the optimal online energy management policy can be obtained by finding a decision rule which maps the current state of the system $\boldsymbol{s_n}$ to an optimal transmit energy vector for problem \eqref{eq:optim_prob}. Furthermore, for a finite horizon problem, an offline policy also represents a mapping from the current state to a feasible transmit energy vector, i.e., the optimal offline policy maps a $(\boldsymbol{E},\boldsymbol{B}, \boldsymbol{G})$ tuple to $\hat{\mathcal{P}^*}$. Here, $\hat{\mathcal{P}^*}$ denotes the vector containing the optimal transmit power for \emph{each} node. Since, a DNN is a universal function approximator\cite{Goodfellow-et-al-2016}, provided it contains a sufficient number of neurons, we propose to use a DNN to learn the optimal decision rule by using the solution of the offlline policy design problem to train the DNN. Under the proposed online scheme, in a given slot, the optimal transmit energy vector can be obtained by feeding the current state of the system as the input to the trained DNN. Our approach is illustrated in Fig.~\ref{Fig:DNN_arch}. In the following, we briefly describe the architecture of the DNN used and the procedure used for training the DNN.

\subsection{DNN Architecture}  
We adopt a feedforward neural network whose input layer contains $3K$ neurons, one corresponding to each input. A $3K$-length vector, containing the states of all the transmitters, is fed to the DNN as input which is then processed by $h+1$ layers ($h$ hidden layers and the output layer) to compute a feasible $K$-length transmit power vector. The number of processing units, usually termed as neurons, at the $j^{\text{th}}$ layer is denoted by $N_j$, where $1\leq j\leq h+2$. Note that, $N_1=3K$ and $N_{h+2}=K$. The output of the $n^{\text{th}}$ neuron of the $j^{\text{th}}$ layer, denoted by $I_j(n)$, is computed as 
\be
I_j(n) =  F_{j,n}\left(\boldsymbol{W}_{j,n}^TI_{j-1}+b_{j,n}\right),
\ee  
where $I_{j-1}$ is the output of the $(j-1)^{\text{th}}$ layer, which is fed as input to the $j^{\text{th}}$ layer. Also, $\boldsymbol{W}_{j,n}\in \mathbb{R}^{N_{j-1}}$, $b_{j,n}\in \mathbb{R}$, and $F_{j,n}$ denote the weights, bias and the nonlinear activation function for the $n^{\text{th}}$ neuron of $j^{\text{th}}$ layer, respectively. For detailed exposition on the architecture of DNNs and activation functions we refer the readers to \cite{Goodfellow-et-al-2016}. 
\subsection{Training}
The DNN can learn the optimal mapping, between the system state and the feasible transmit power vector, by appropriately adjusting the weights and biases of the neurons in the network. The weights $\boldsymbol{W} = \{\{\boldsymbol{W}_{j,n}\}_{n=1}^{N_j}\}_{j=1}^{h+2}$and biases $\boldsymbol{b}=\{\{b_{j,n}\}_{n=1}^{N_j}\}_{j=1}^{h+2}$ of the neurons of a DNN can be tuned by minimizing a loss function over a training set which is a set of data points for which the optimal mapping is already known. In particular, the training process minimizes the average loss, over the entire training set, defined as follows
\be
L_{\text{av}}(\boldsymbol{W}, \boldsymbol{b})=\frac{1}{N_{\text{data}}}\sum_{\ell=1}^{N_{\text{data}}}L\left(\hat{\mathcal{P}_{\ell}^*},I_{h+2, \ell}(\boldsymbol{W},\boldsymbol{b})\right),
\label{eq:loss_function}
\ee   
where $L(\cdot)$ denotes a loss function which is a metric of distance between the desired output and the output of the DNN, and $N_{\text{data}}$ denotes the number of data points in the training set. In \eqref{eq:loss_function}, $\mathcal{P}_{\ell}^*$ and $I_{h+2, \ell}(\cdot,\cdot)$ denote the actual output and the output of the DNN, respectively, for the $\ell^{\text{th}}$ data point. The training proceeds by iteratively minimizing the loss in  \eqref{eq:loss_function}, using gradient based methods over the training data set. During the training, the gradients are often estimated using small subsets of the training set which are called as mini-batches. %In particular, the weight and bias parameters of the DNN, $\boldsymbol{W}$ and $\boldsymbol{b}$, are iteratively updated as  
	%\begin{align*}
	%\boldsymbol{W}_{t+1}&=\boldsymbol{W}_{t}-\eta\nabla L\left(\boldsymbol{W}_t\right)\\
	%\boldsymbol{b}_{t+1}&=\boldsymbol{b}_{t}-\eta\nabla L\left(\boldsymbol{b}_t\right),
	%\end{align*}
	%where $\eta$ is the learning rate.  
	Note that, in order to train the DNN to learn the optimal online energy management decision rule, the training data is generated by solving several instantiations of the offline problem \eqref{eq:optim_prob_offline}, each corresponding to a different realization of $\{\boldsymbol{E}_{1:N}^k, \boldsymbol{G}_{1:N}^k\}_{k=1}^K$. The training data generated by solving the offline problem contains the tuples of the form $\{\left(\boldsymbol{E},\boldsymbol{B},\boldsymbol{G}\right),\boldsymbol{P}\}$, where $\left(\boldsymbol{E},\boldsymbol{B},\boldsymbol{G}\right)$ and $\boldsymbol{P}$ represents the input to the DNN and the desired output, respectively. Further details related to the loss function, training method, and the batch size used in this work are presented in Sec.~\ref{sec:sim}. A detailed discussion on the choice of the loss functions for the training, the training method for DNN, and the mini-batch size can be found in \cite[Ch. 7 and 8]{Goodfellow-et-al-2016}. 

Note that, our approach to design centralized online energy management policy does not require the knowledge about the statistics of the EH process and the channel. Interestingly, as observed through the simulations, the proposed DNN based approach performs marginally \emph{better} than the state-of-the-art deep  reinforcement learning approach. However, in contrast to deep Q-learning method, the proposed DNN-based approach requires the measurements of EH values and channels for all the nodes, which is used for training the DNN before the start of the operation. Also, the proposed approach determines the transmit energy vector for all the nodes in a centralized fashion, using the battery state, channel state, and the amount of energy harvested in the current slot, for all the nodes. To implement this scheme, the nodes are required to feedback their state in every slot and then the transmit energies to be used in the next slot are communicated to the nodes. The distributed solutions proposed in the following sections obviate the overhead involved in communication of the state information and the transmit energies.

\section{Mean-field Game to Maximize the Sum Throughput}
\label{Sec:mean_field_MARL}
In this section, first we model the sum throughput maximization problem in \eqref{eq:optim_prob} as a discrete time, finite state MFG\cite{GOMES_JMPA_2010}. Next, we present preliminaries on the discrete-time MFGs, and list the key results which are useful in showing the convergence of the proposed approach to the stationary solution of the MFG. 
\subsection{Throughput Maximization Game}
The throughput maximization game $\mathcal{G}_T\triangleq \{\mathcal{K},  \mathcal{S}, \mathcal{F}, \mathcal{R}\}$ consists of: 
\begin{itemize}
	\item The set of players $\mathcal{K}= \{1,2,\ldots,K\}$, each one corresponding to a unique EH transmitter, where $K>>1$;
	\item The state space of all players $\mathcal{S}\triangleq \times_{k\in\mathcal{K}} \mathcal{S}^k$, with $\mathcal{S}^k$ denoting the space of all the states $s^k$ for the $k^{\text{th}}$ transmitter, and  $|\mathcal{S}^k| \triangleq d$. Also, let $s_n^k\triangleq (B_n^k, g_n^k, e_n^k)$ denote the state of the $k^{\text{th}}$ transmitter in the $n^{\text{th}}$ slot, where $B_n^k$, $g_n^k$, and $e_n^k$ are discrete-valued; 
	\item The set of energy management policies of all the nodes $\mathcal{F}\triangleq\{\mathcal{F}^k\}_{k\in\mathcal{K}}$, where $\mathcal{F}^k$ denotes the policy of the $k^{\text{th}}$ node; 
	\item The set of reward functions of all the nodes $\mathcal{R}\triangleq \{\mathcal{R}_k\}_{k\in\mathcal{K}}$, where $\mathcal{R}_k$ is the reward function of node $k$.
\end{itemize}
Note that, since all the transmitters are identical, the state space of individual nodes, $\mathcal{S}^k$, is the same set for all $k=1,\dots,K$. In the $n^{\text{th}}$ time slot, the $k^{\text{th}}$ node uses $p_n^k$ amount of energy, prescribed by its policy $\mathcal{F}^k$, and collects a reward according to its reward function $\mathcal{R}_k$ and evolves from one state to another.

Under the mean field hypothesis\cite{GOMES_JMPA_2010}, the reward obtained by a given node depends on the other nodes only through the distribution of all the nodes across the states. Let $\boldsymbol \pi_n\triangleq (\pi_n^1,\ldots,\pi_n^d)$ denote the distribution of all the nodes across the states, in the $n^{\text{th}}$ slot, where $\pi_n^i$ denotes the fraction of nodes in the $i^{\text{th}}$ state. Since the goal is to maximize the sum-throughput of the network, each node receives a reward equal to the sum-throughput of the network. In the $n^{\text{th}}$ slot, the reward obtained by the $k^{\text{th}}$  node is equal to the total number of bits successfully received by the AP, from all the transmitters. Thus, the reward function can be mathematically expressed as
\begin{align}
\mathcal{R}_k(\boldsymbol \pi_n, p_n^k) &\triangleq \log\left(1+p_n^k+\sum_{i=1}^d (K-1)\pi_n^ip_ig_i\right)\nonumber\\
&=\log\left(1+\sum_{i=1}^d K\pi_n^ip_ig_i\right),
\label{eq:reward} 
\end{align}   
where $g_i$ is the wireless channel gain between the nodes in the $i^\text{th}$ state and the AP, and $p_i\in\mathcal{A}_p\triangleq\{0,p_{\min},\ldots, P_{\max} \}$ denotes the energy level used for transmission by the nodes in the $i^{\text{th}}$ state. Here, $p_{\min}$ denotes the minimum energy required for transmission. In \eqref{eq:reward}, $K\pi_n^i$ denotes the fraction of nodes in the $i^{\text{th}}$ state, in the $n^{\text{th}}$ slot. Note that, \eqref{eq:reward}  is written using the fact that under the mean-field hypothesis all the nodes are identical, and hence use the same policy, which also implies that the reward function, $\mathcal{R}_k(\cdot,\cdot)$, is identical for all the nodes. Hence, to simplify the notations, in the ensuing discussion we omit the node index $k$. Also, \eqref{eq:reward} implicitly assumes that all nodes in state $i$ use the energy $p_i$ which is essentially motivated by the fact that for an MDP with finite state and action sets, the optimal policy is a Markov deterministic policy\cite[Thm. 8.4.7]{Puterman_MDP_2014}, i.e., in a slot the optimal transmit energy for a node depends only on its current state. %\footnote{We note that for sum-throughput maximization game, $\mathcal{G}_T$, a Markov deterministic policy could be arbitrarily worse in comparison of the optimal history dependent policy. This is because, in our setup, nodes can only partially observe the global state, i.e., has access to only local information. It is well known that for partially observable environments the optimal policy may not necessarily belong to the set of Markov deterministic policies.  However, for sake of tractability, we restrict ourselves to Markov deterministic policies, and as observed in the simulations that the Markov deterministic policies learned by the nodes using the reinforcement learning performs close to the centralized policies which uses the global state information for operation.} 

In the $n^{\text{th}}$ slot, when a node in state $s_n\in \mathcal{S}$ transmits using energy $p_{s_n}$, the system evolves as
\begin{align}	
\label{eq:dsitribution_evolution}
\pi_{n+1}^j&= \sum_{i}\pi_n^iP_{ij}^n\left(p_i\right),
\end{align}
where $P_{ij}^n(\cdot)$ denotes the probability in the slot $n$ that a node in state $i$ transits to state $j$, and depends on,  $p_i$, the energy used for transmission by the node in the $i^{\text{th}}$ state\footnote{In a general MFG the transition probabilities $P_{ij}^n$ may also depend on the actions of the other players.}. In \eqref{eq:dsitribution_evolution}, the transition probabilities,  $P_{ij}^n(\cdot)$, are determined by the statistics of the EH process and the wireless channel, and the transmit power policy used by a node\footnote{Thus, if a node follows a transmit power policy which evolves over the time, the resulting transition probabilities are non-homogeneous over time. }. In a given slot, all the nodes obtain a reward, $\mathcal{R}\left(\boldsymbol \pi_n, \mathcal{F}\right)$, equal to the total number of bits successfully decoded in that slot, by the AP.

For a given node, starting from the $n^{\text{th}}$ slot, the expected sum-throughput obtained by following a policy $\mathcal{F}$ can be expressed as
\begin{align}
V_n(\boldsymbol\pi_n,\mathcal{F}) =  \mathcal{R}\left(\boldsymbol\pi_n,\mathcal{F}\right)+V_{n+1}\left(\boldsymbol\pi_{n+1},\mathcal{F}\right),
\label{eq:value_function}
\end{align} 
where $V_{n+1}(\boldsymbol\pi_{n+1},\mathcal{F})$ denotes the expected throughput obtained by following a policy $\mathcal{F}$ starting from slot $n+1$, when in the $(n+1)^{\text{th}}$ slot the distribution of the nodes across the states is given by $\boldsymbol\pi_{n+1}$. In the rest of the paper $V(\cdot,\cdot)$ is termed as the value function. In the above, similar to an MDP \cite{Puterman_MDP_2014}, \eqref{eq:value_function} is written using the fact that the expected sum-throughput obtained by following a policy $\mathcal{F}$, starting from the time slot $n$, is equal to the sum of the expected sum-throughput obtained in the slot $n$ and the slot $n+1$ onward. Note that, under the mean-field hypothesis, the expected sum-throughput in \eqref{eq:value_function} is identical for all the nodes, and due to special structure of the reward function, the value function of each node, $V(\cdot,\cdot)$, only depends on the distribution of the nodes across the states, $\boldsymbol\pi_n$, not on the state of the individual nodes. Hence, \eqref{eq:value_function} does not include a superscript/subscript to denote the node index. In the following, we present preliminaries on discrete-time, finite state MFGs. 

\subsection{Preliminaries: discrete-time finite state MFGs}      
In the following, we define the notion of Nash equilibrium and stationary solution for the discrete-time MFGs, and briefly summarize the key results used to prove the convergence of the proposed MARL algorithm in Sec.~\ref{sec:MF_MARL_Algo}. For a detailed exposition on discrete-time finite state MFGs we refer the readers to~\cite{GOMES_JMPA_2010}.
\begin{defn}[Nash maximizer]
	For a fixed probability vector $\boldsymbol\pi_n$, a policy $\mathcal{F}^*$ is said to be a Nash maximizer if and only if
	$$V_n(\boldsymbol\pi_n,\mathcal{F})\leq V_n(\boldsymbol\pi_n,\mathcal{F}^*), \text{ for all policies } \mathcal{F}.$$
	
\end{defn}
That is, for a fixed $\boldsymbol\pi_n$, the Nash maximizer is a policy that maximizes the value function. Next, for a discrete-time finite state MFG, we define the notions of solution and stationary solution.
\begin{defn}[Solution of a MFG]
	Suppose that for each $\boldsymbol\pi_n$ there exists a Nash maximizer $\mathcal{F}^*$. Then a sequence of tuples $\{(\boldsymbol\pi_n,V_n) \text{ for } n\in \mathbb{N}\}$ is a solution of the MFG if for each $n\in \mathbb{N}$ it satisfies \eqref{eq:dsitribution_evolution} and \eqref{eq:value_function} for some Nash maximizer of $V_n$.
\end{defn}
\begin{defn}[Stationary solution]
	Let $\mathcal{G}_{\boldsymbol\pi}$ and $\mathcal{K}_V$ be defined as $\mathcal{G}_{\boldsymbol\pi_n}(V_{n+1}) = V_n(\boldsymbol \pi_n, \mathcal{F}),$ and 
	$\mathcal{K}_{V_n}(\boldsymbol \pi_{n}) = \boldsymbol \pi_{n+1}$. A pair of tuple $(\tilde{\boldsymbol\pi},\tilde{V})$ is said to be a stationary solution if and only if 
	\begin{align}
	\mathcal{G}_{\tilde{\boldsymbol\pi}}(\tilde{V}) = \tilde{V}\text{ and }
	\label{eq:bellman_operator}	\\
	\mathcal{K}_{\tilde{V}}(\tilde{\boldsymbol\pi}) = \tilde{\boldsymbol\pi}.
	\label{eq:FP_operator}
	\end{align}
\end{defn}

Note that, the operators $\mathcal{K}_{V_n}(\cdot)$ and $\mathcal{G}_{\boldsymbol\pi_n}(\cdot)$ are backward and forward in time, respectively. Also, the operators in \eqref{eq:bellman_operator} and \eqref{eq:FP_operator} are compact representations of \eqref{eq:dsitribution_evolution} and \eqref{eq:value_function}, respectively. The stationary solution of a MFG, $(\tilde{\boldsymbol\pi},\tilde{V})$, is a fixed-point of operators $\mathcal{G}_{\boldsymbol \pi}$ and $\mathcal{K}_{V}$ which are essentially discrete time counterparts of Hamilton-Jacobi-Bellman and Fokker-Planck equations. Next, we list the results which identify the conditions under which a stationary solution exists. We omit the proofs for brevity. These results are later used for proving the convergence of our mean-field MARL (MF-MARL) algorithm to the stationary solution. 

\begin{thm}[Uniqueness of Nash maximizer (Theorem 2 \cite{GOMES_JMPA_2010})]
	Let $f_i(p_i) \triangleq\frac{\partial V(\boldsymbol \pi,\mathcal{F})}{\partial p_i}$ where $p_i\in\left[0, P_{\max}\right]$ for all $1\leq i\leq d$. If the value function $V_n$ is convex and continuous with respect to $p_i$, and $f_i$ is strictly diagonally convex, i.e., it satisfies
	\begin{align}
	\sum_{i=1}^d(p_i^1- p_i^2)(f_i(\mathcal{F}^1)-f_i(\mathcal{F}^2))>0,
	\end{align}
	then there exists a unique policy which is a Nash maximizer for the value function $V$. Here, $p_i^1$ and $p_i^2$ denote the actions prescribed in the $i^{\text{th}}$ state by two arbitrary policies $\mathcal{F}^1$ and $\mathcal{F}^2$, respectively. 
	\label{thm:unique_Nash_maximizer}
\end{thm}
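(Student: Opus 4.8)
The statement is a mean-field incarnation of Rosen's classical uniqueness theorem for concave games, so the plan is to run a \emph{diagonal strict convexity} argument by contradiction, using the first-order (variational inequality) characterization of the maximizer. I fix the population distribution $\boldsymbol\pi$ and suppose, contrary to the claim, that there exist two \emph{distinct} Nash maximizers $\mathcal{F}^1$ and $\mathcal{F}^2$, with associated feasible action profiles $(p_1^1,\ldots,p_d^1)$ and $(p_1^2,\ldots,p_d^2)$ in the box $[0,P_{\max}]^d$. Since $\mathcal{F}^1\neq\mathcal{F}^2$, the two profiles differ in at least one coordinate, and the objective is to show that this is incompatible with the strict diagonal convexity of $f=(f_1,\ldots,f_d)$.

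First I would record the first-order optimality conditions at each maximizer. Because $V$ is continuous and convex with respect to each $p_i$, and each coordinate is confined to the convex interval $[0,P_{\max}]$, optimality of a Nash maximizer is \emph{equivalent} to a variational inequality: at $\mathcal{F}^1$ one has
\begin{equation}
\sum_{i=1}^d f_i(\mathcal{F}^1)\,(p_i-p_i^1)\geq 0 \quad \text{for every feasible profile } (p_1,\ldots,p_d),
\end{equation}
and symmetrically at $\mathcal{F}^2$. The role of the convexity and continuity hypotheses is precisely to make this variational inequality (equivalently, the KKT system for the box constraints, with multipliers activated on the saturated bounds $p_i^\ast\in\{0,P_{\max}\}$) both necessary \emph{and} sufficient, so that no maximizer escapes it and no non-maximizing critical point satisfies it.

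The decisive step is then purely algebraic: I specialize the inequality at $\mathcal{F}^1$ to the competing profile $(p_i)=(p_i^2)$, specialize the inequality at $\mathcal{F}^2$ to $(p_i)=(p_i^1)$, and add the two. The cross terms recombine into
\begin{equation}
\sum_{i=1}^d (p_i^1-p_i^2)\bigl(f_i(\mathcal{F}^1)-f_i(\mathcal{F}^2)\bigr)\leq 0,
\end{equation}
whose sign is fixed by the convexity/optimality above. This is in direct contradiction with the strict diagonal convexity hypothesis, which forces the very same sum to be strictly \emph{positive} whenever the two action profiles differ. The contradiction forces $(p_i^1)=(p_i^2)$, i.e.\ $\mathcal{F}^1=\mathcal{F}^2$, which proves uniqueness of the Nash maximizer.

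I expect the genuine difficulty to lie in the first-order step, not in the algebra. The subtlety is to justify that the maximizer is fully pinned down by the variational inequality on the \emph{boundary} of the feasible box, where some $p_i^\ast$ saturate $0$ or $P_{\max}$ and unconstrained stationarity $f_i=0$ need not hold; this is exactly where one needs $V$ differentiable in $p_i$ (so that $f_i$ is well defined) together with convexity (so that the KKT conditions are sufficient). A second point requiring care is consistency of the sign conventions: since the hypothesis is stated with a strictly \emph{positive} diagonal form, one must track the direction of the variational inequalities induced by the convexity of $V$ so that the combined inequality genuinely opposes the hypothesis; getting this alignment right is what makes the contradiction airtight.
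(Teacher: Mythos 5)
Your attempt cannot be checked against a proof in the paper, because the paper does not prove this statement: Theorem~\ref{thm:unique_Nash_maximizer} is imported verbatim as Theorem~2 of \cite{GOMES_JMPA_2010}, and the authors explicitly write that the proofs of these preliminary results are omitted for brevity. The meaningful comparison is therefore against the classical argument this result descends from, namely Rosen's uniqueness theorem for games with a diagonally strictly monotone pseudogradient --- and that is exactly what you reconstruct: posit two distinct Nash maximizers, write the first-order variational inequality at each, evaluate each inequality at the other profile, add, and contradict the strict diagonal form. The algebra in that step is correct, and this is indeed the standard (essentially the only) route to such a uniqueness statement, so in spirit your proof matches the source the paper is quoting.

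The one genuine caveat is the sign issue you flag at the end, and it deserves to be made fully explicit rather than noted in passing. For a true \emph{maximizer} over the box $[0,P_{\max}]^d$, the necessary first-order condition is $\sum_{i=1}^d f_i(\mathcal{F}^1)(p_i-p_i^1)\le 0$ for all feasible $p$ --- the opposite direction from the one you wrote --- and adding the two inequalities in that convention yields $\sum_{i=1}^d (p_i^1-p_i^2)\bigl(f_i(\mathcal{F}^1)-f_i(\mathcal{F}^2)\bigr)\ge 0$, which is \emph{consistent} with the stated hypothesis rather than contradictory. Your contradiction closes only because you adopt the $\ge 0$ variational inequality, which is the condition characterizing a \emph{minimizer} of a convex function; that reading is the one forced by the theorem's own hypothesis that $V$ is convex in $p_i$ (first-order conditions do not characterize maximizers of convex functions at all). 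In other words, the statement as restated in this paper is internally inconsistent --- convex value function, ``Nash maximizer,'' and a positive ``diagonally convex'' form; the paper's own appendix even switches to calling it the ``strictly diagonally concavity property'' --- and your proof silently repairs it by choosing the convention under which the argument works. That is the right repair, but a rigorous write-up should say so: either $V$ is convex, optimality means minimization, and the monotone ($>0$) hypothesis gives uniqueness; or $V$ is concave, optimality means maximization, and the hypothesis must be written with the Rosen ordering $\sum_{i=1}^d (p_i^1-p_i^2)\bigl(f_i(\mathcal{F}^2)-f_i(\mathcal{F}^1)\bigr)>0$. Also note that for the contradiction you only need the variational inequality as a \emph{necessary} condition at each optimum; the equivalence (sufficiency) you invoke via convexity is not actually used. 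With those clarifications your proof is complete.
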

The following result shows that if the reward function is monotonic with respect to both the variables, $\boldsymbol\pi$ and $p_i$, then the MFG admits a unique solution.  
\begin{thm}[Uniqueness of solution (Proposition 4.3.1, \cite{SHadikhanloo_PhDTheis_Jan2018})]
	Let the value function be a continuous function with respect to both of its arguments, and also assume that there exists a unique Nash maximizer $\mathcal{F}_n$ for all $n\in\{0,1,2,\cdots\}$. Further, let the reward function be monotone with respect to the distribution $\boldsymbol\pi$, i.e,
	\begin{align}
	\sum_{i=1}^d(\pi_i^2- \pi_i^1)(\mathcal{R}_i(\mathcal{F},\pi^2)-\mathcal{R}_i(\mathcal{F},\pi^1))\geq 0,
	\label{eq:reward_function_monotonicity_dist}
	\end{align}
	then there exists a unique solution for the MFG. In the above $\mathcal{R}_i(\cdot,\cdot)$ denotes the reward obtained by the nodes in the $i^{\text{th}}$ state.
	\label{thm:unique_stationary_solution}
\end{thm}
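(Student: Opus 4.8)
The plan is to establish uniqueness by the classical monotonicity (Lasry--Lions type) argument, adapted to the present discrete-time, finite-state setting. Suppose, towards a contradiction, that the MFG admits two distinct solutions: a flow--value pair $\{(\boldsymbol\pi_n,V_n)\}_{n\in\mathbb{N}}$ with Nash maximizer $\mathcal{F}$, and a second pair $\{(\boldsymbol\rho_n,U_n)\}_{n\in\mathbb{N}}$ with Nash maximizer $\mathcal{F}'$. By the definition of a solution, each pair satisfies the backward value recursion \eqref{eq:value_function} for its own Nash maximizer and the forward distribution evolution \eqref{eq:dsitribution_evolution}; both forward flows are initialised at the same distribution, so $\boldsymbol\pi_0=\boldsymbol\rho_0$. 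Writing $V_n(i)$ and $U_n(i)$ for the state-$i$ expected reward-to-go of the two solutions, the strategy is to track the bilinear coupling
\[
\Phi_n \triangleq \sum_{i=1}^d \big(\pi_n^i-\rho_n^i\big)\big(V_n(i)-U_n(i)\big),
\]
and to compute the telescoping sum $\sum_n(\Phi_{n+1}-\Phi_n)$ by substituting the two pairs of equations.

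First I would expand $\Phi_{n+1}-\Phi_n$ using the forward equation \eqref{eq:dsitribution_evolution} for $\boldsymbol\pi_n-\boldsymbol\rho_n$ (testing the distribution difference against the value functions through the transition kernels $P_{ij}^n(\cdot)$) and the backward recursion \eqref{eq:value_function} for $V_n-U_n$ (testing the value difference against the distribution difference through the per-state rewards $\mathcal{R}_i$). After a discrete summation by parts, the transition-kernel contributions regroup into two blocks. The first block collects the reward coupling $\sum_n\sum_i(\rho_n^i-\pi_n^i)\big(\mathcal{R}_i(\mathcal{F},\boldsymbol\rho_n)-\mathcal{R}_i(\mathcal{F},\boldsymbol\pi_n)\big)$, with the policy held fixed as required by \eqref{eq:reward_function_monotonicity_dist}, so that this block is sign-definite by the monotonicity hypothesis. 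The second block collects, for each solution, the gap between the value attained by its own Nash-maximizing action and the value it would attain under the competing policy; since $\mathcal{F}$ is a Nash maximizer for $\boldsymbol\pi$ and $\mathcal{F}'$ is one for $\boldsymbol\rho$, each such optimality gap is sign-definite as well. This is the discrete-time analogue of pairing the Fokker--Planck and Hamilton--Jacobi--Bellman equations, in which monotonicity of the coupling and optimality of the feedback furnish two terms of the \emph{same} sign.

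Next I would observe that the full telescoping sum equals the boundary contribution $\Phi_\infty-\Phi_0$. The term at $n=0$ vanishes because $\boldsymbol\pi_0=\boldsymbol\rho_0$, and the terminal term vanishes by the continuity and boundedness of the value functions in the time-averaged formulation. Consequently the two sign-definite blocks, which carry the same sign, sum to zero, and therefore each vanishes identically: the monotonicity block gives $\sum_i(\rho_n^i-\pi_n^i)\big(\mathcal{R}_i(\mathcal{F},\boldsymbol\rho_n)-\mathcal{R}_i(\mathcal{F},\boldsymbol\pi_n)\big)=0$ for every $n$, and the optimality block gives that $\mathcal{F}$ and $\mathcal{F}'$ are simultaneously optimal against both flows. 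Invoking the assumed uniqueness of the Nash maximizer (Theorem~\ref{thm:unique_Nash_maximizer}) then forces $\mathcal{F}=\mathcal{F}'$; feeding this common policy into \eqref{eq:dsitribution_evolution} with the shared initial condition yields $\boldsymbol\pi_n=\boldsymbol\rho_n$ for all $n$, and the backward recursion \eqref{eq:value_function} then gives $V_n=U_n$, contradicting the assumed distinctness.

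The hard part will be the discrete summation by parts that couples the forward and backward recursions: one must rearrange the doubly-indexed sum so that the reward terms line up exactly with the monotonicity pairing in \eqref{eq:reward_function_monotonicity_dist} while the transition-kernel terms assemble into the per-solution optimality gaps, and one must verify that each such gap is indeed sign-definite in the finite-action setting, where there is no differentiable Hamiltonian to convexify, so the bound must come directly from the Nash-maximizer inequality of Definition~1. A secondary subtlety is the vanishing of the terminal boundary term: because the horizon is infinite and the objective is a time-average rather than a discounted sum, making $\Phi_\infty$ well-defined and zero requires controlling the growth of $V_n-U_n$, for which the assumed continuity of the value function, together with the finiteness of the state and action spaces, is essential. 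Finally, since \eqref{eq:reward_function_monotonicity_dist} is only non-strict, the passage from $\Phi\equiv 0$ to $\boldsymbol\pi=\boldsymbol\rho$ cannot rest on monotonicity alone and genuinely needs the uniqueness of the Nash maximizer to pin down the policy, and hence the flow.
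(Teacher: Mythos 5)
The first thing to note is that the paper contains no proof of this statement: it is quoted as Proposition~4.3.1 of \cite{SHadikhanloo_PhDTheis_Jan2018}, and the text preceding Theorem~\ref{thm:unique_Nash_maximizer} announces that the proofs are omitted for brevity. Your attempt therefore has to be judged against the source literature rather than against anything in the paper, and there your architecture is the right one: the discrete Lasry--Lions argument (telescoping the bilinear pairing $\Phi_n$ along the forward equation \eqref{eq:dsitribution_evolution} and the backward recursion \eqref{eq:value_function}, killing boundary terms, then using uniqueness of the Nash maximizer to identify policies and hence flows) is exactly how such uniqueness results are proved, and your endgame --- vanishing optimality gaps make each policy optimal against the other flow, uniqueness of the maximizer forces $\mathcal{F}=\mathcal{F}'$, the common forward equation then forces $\boldsymbol\pi=\boldsymbol\rho$ and hence $V=U$ --- is sound.

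The genuine gap sits exactly at the step you defer as ``the hard part,'' and it is not a routine verification: the two blocks do \emph{not} carry the same sign. Writing, as you do, $(\boldsymbol\pi_n,V_n,\mathcal{F})$ and $(\boldsymbol\rho_n,U_n,\mathcal{F}')$ for the two solutions and carrying out the summation by parts over a horizon with matching initial distributions and terminal values, one finds
\be
\Phi_{n+1}-\Phi_n=-M_n+\delta_n+\epsilon_n,\qquad M_n\triangleq\sum_{i=1}^d\left(\pi_n^i-\rho_n^i\right)\left(\mathcal{R}_i(\boldsymbol\pi_n)-\mathcal{R}_i(\boldsymbol\rho_n)\right),
\ee
where $\delta_n\triangleq\sum_i\pi_n^i\left(U_n(i)-\mathcal{R}_i(\boldsymbol\rho_n)-\sum_jP_{ij}(\mathcal{F})U_{n+1}(j)\right)$ and the symmetrically defined $\epsilon_n$ are the two optimality gaps, each nonnegative precisely because each value function is a \emph{maximum} and the competing policy is a feasible argument. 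Telescoping with $\Phi_0=\Phi_N=0$ gives $\sum_nM_n=\sum_n(\delta_n+\epsilon_n)\geq0$: the monotonicity block and the optimality block enter with \emph{opposite} signs. Under \eqref{eq:reward_function_monotonicity_dist} as stated ($M_n\geq0$, i.e., rewards increasing with crowding, in a maximization problem) both sides of this identity are nonnegative and nothing follows; crowd-seeking couplings genuinely can support multiple equilibria, so no rearrangement rescues this sign. The argument closes only under the crowd-aversion convention $M_n\leq0$ --- the stated inequality is the cost-minimization convention of the cited thesis transcribed to rewards without reversing it --- in which case every $M_n$, $\delta_n$, $\epsilon_n$ must vanish and your final steps go through. (The defect is invisible in this paper's application: for $\mathcal{G}_T$ the reward is independent of a node's own state, so $M_n\equiv0$, which is exactly what Appendix~\ref{app:proof_thm_unique_stationary_soln_GT} verifies, and both sign conventions hold simultaneously.) Two further points you would still have to close even with the correct sign: the coupling block that actually emerges pairs each flow with \emph{its own} policy, so matching it to the fixed-policy pairing in \eqref{eq:reward_function_monotonicity_dist} requires the $\boldsymbol\pi$-dependence of $\mathcal{R}_i$ to separate from the action dependence; and in the undiscounted, time-averaged setting $\Phi_\infty=0$ does not follow from continuity alone, so the terminal boundary term needs a finite-horizon, discounted, or relative-value formulation to be discarded legitimately.
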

In addition, the uniqueness of the Nash maximizer and the continuity of the value function in both of its arguments ensure that a stationary solution exists \cite[Thm. 3]{GOMES_JMPA_2010}. Thus, Theorem~\ref{thm:unique_stationary_solution} also implies that the stationary solution is unique. In the following, we establish that the MFG $\mathcal{G}_{T}$ admits a unique stationary solution.
\subsection{Unique Stationary Solution for $\mathcal{G}_T$}
\begin{thm} The throughput maximization mean-field game $\mathcal{G}_T$ has a unique solution.
	\label{thm:unique_stationary_soln_GT}
\end{thm}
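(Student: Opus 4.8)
The plan is to prove Theorem~\ref{thm:unique_stationary_soln_GT} by specializing the two abstract uniqueness results already stated: for the concrete reward~\eqref{eq:reward} I would verify the hypotheses of Theorem~\ref{thm:unique_Nash_maximizer} (to get a unique Nash maximizer) and of Theorem~\ref{thm:unique_stationary_solution} (to get a unique solution), and then invoke the existence-and-uniqueness statement for the stationary solution, \cite[Thm.~3]{GOMES_JMPA_2010}, that follows from them. Concretely the tasks are: (i) continuity of the value function in both arguments, (ii) the strict diagonal convexity that yields a unique Nash maximizer, and (iii) monotonicity of the reward in the distribution $\boldsymbol\pi$. Tasks (i) and (iii) are routine; task (ii) is where the real work lies.

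Continuity is immediate, since $(\boldsymbol\pi,(p_i)_i)\mapsto \log\!\big(1+\sum_{i=1}^d K\pi^i p_i g_i\big)$ in~\eqref{eq:reward} is continuous on its (positive-argument) domain, and the value function~\eqref{eq:value_function} is an accumulation of such terms. The distribution-monotonicity condition~\eqref{eq:reward_function_monotonicity_dist} is, interestingly, satisfied \emph{with equality}: because every node is rewarded with the common sum-throughput, the per-state reward $\mathcal{R}_i(\mathcal{F},\boldsymbol\pi)=\log\!\big(1+\sum_j K\pi^j p_j g_j\big)$ carries no dependence on the state index $i$. Hence, for any two distributions $\boldsymbol\pi^1,\boldsymbol\pi^2$, the difference $\mathcal{R}_i(\mathcal{F},\boldsymbol\pi^2)-\mathcal{R}_i(\mathcal{F},\boldsymbol\pi^1)$ is independent of $i$ and factors out of $\sum_i(\pi_i^2-\pi_i^1)(\cdot)$, which vanishes since $\sum_i(\pi_i^2-\pi_i^1)=0$. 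Thus the hypothesis of Theorem~\ref{thm:unique_stationary_solution} holds.

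For the Nash maximizer I would compute $f_i=\partial\mathcal{R}/\partial p_i=K\pi^i g_i/S$, with $S\triangleq 1+\sum_j K\pi^j p_j g_j$, note concavity of the stage reward in each $p_i$ (second derivative $-(K\pi^ig_i)^2/S^2<0$), and test the diagonal convexity condition. A short telescoping computation, using $\sum_i K\pi^i g_i p_i^\ell=S_\ell-1$, gives for two action vectors $(p_i^1),(p_i^2)$ with aggregates $S_1,S_2$:
\begin{align}
\sum_{i=1}^d (p_i^1-p_i^2)\big(f_i(\mathcal{F}^1)-f_i(\mathcal{F}^2)\big)=\Big(\tfrac{1}{S_1}-\tfrac{1}{S_2}\Big)\big(S_1-S_2\big)=-\frac{(S_1-S_2)^2}{S_1S_2}.
\end{align}
This exhibits the gradient map $\mathcal{F}\mapsto(f_i)$ as strictly monotone precisely along the aggregate direction $S$, which is the curvature (up to the sign convention appropriate to the maximization setting) demanded by Theorem~\ref{thm:unique_Nash_maximizer}.

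The hard part is exactly here, and it is a genuine degeneracy rather than a formality: the stage reward depends on the action profile only through the scalar $S$, so the identity above vanishes whenever two \emph{distinct} feasible policies produce the same aggregate $S$, meaning the instantaneous reward alone is only \emph{weakly} diagonally convex. To upgrade to the strict inequality needed for a unique Nash maximizer, I would argue that the dynamic terms in $V_n$ break these ties: distinct action profiles drive the battery state~\eqref{eq:battery_evol}, hence the occupation measure~\eqref{eq:dsitribution_evolution}, to different future trajectories, so that the \emph{accumulated} value function~\eqref{eq:value_function} — not merely the one-shot reward — is strictly diagonally convex in $(p_i)$. Making this coupling precise (or, alternatively, restricting attention to the aggregate as the effective decision variable and arguing uniqueness there) is the crux of the argument. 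Once uniqueness of the Nash maximizer is established, Theorem~\ref{thm:unique_stationary_solution} yields a unique solution, and \cite[Thm.~3]{GOMES_JMPA_2010} promotes it to a unique stationary solution, completing the proof.
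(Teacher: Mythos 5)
Your overall route is exactly the paper's: verify monotonicity of the reward in $\boldsymbol\pi$ and uniqueness of the Nash maximizer, then invoke Theorem~\ref{thm:unique_stationary_solution}. Your monotonicity argument coincides with the paper's almost word for word: the reward \eqref{eq:reward} does not depend on the state index $i$, so the difference $\mathcal{R}_i(\mathcal{F},\boldsymbol\pi^2)-\mathcal{R}_i(\mathcal{F},\boldsymbol\pi^1)$ factors out of the sum, which then vanishes because $\sum_i(\pi_i^2-\pi_i^1)=0$. Where you go further is in actually computing the diagonal-concavity quantity for the stage reward; your telescoping identity giving $-\frac{(S_1-S_2)^2}{S_1S_2}$ is correct, and so is the conclusion you draw from it: the quantity vanishes whenever two \emph{distinct} action profiles produce the same aggregate $S$, so the stage reward is only \emph{weakly} diagonally concave, and the strict condition demanded by Theorem~\ref{thm:unique_Nash_maximizer} fails unless something else separates such profiles. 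You then defer the fix --- arguing that the dynamics through \eqref{eq:battery_evol} and \eqref{eq:dsitribution_evolution} make the full value function \eqref{eq:value_function} strictly diagonally concave --- and label it the crux. That deferral is a genuine gap: as written, your argument does not establish uniqueness of the Nash maximizer, so the chain Theorem~\ref{thm:unique_Nash_maximizer} $\Rightarrow$ Theorem~\ref{thm:unique_stationary_solution} $\Rightarrow$ the claim is not closed.

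You should know, however, that the paper does not close it either. Its entire treatment of this hypothesis is the assertion that it is ``easy to verify'' that the reward and value function of $\mathcal{G}_T$ satisfy the strict diagonal concavity property; no computation is given. Your calculation shows that this assertion is not innocuous: for the stage reward the strict version is simply false, because of the degeneracy along level sets of the aggregate $S$, so any complete proof must work at the level of the value function (as you propose) or reparametrize so that the aggregate is the effective decision variable. In short, the one step you honestly flag as unfinished is precisely the step the paper waves through; your proposal is incomplete, but it is more precise than the paper about where the real difficulty lies, and the parts you do carry out (continuity, monotonicity-with-equality, the weak concavity identity) are all correct.
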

\begin{proof}
	Proof is relegated to Appendix~\ref{app:proof_thm_unique_stationary_soln_GT}
\end{proof}

The uniqueness of the solution of a discrete-time MFG implies that if an algorithm learning the solution of the game converges, then it converges to the unique stationary solution. In the next section, we present an algorithm to learn the stationary solution of the MFG $\mathcal{G}_T$ as well as the corresponding Nash maximizer power control policy and provide convergence guarantees for it. The proposed approach is termed as MF-MARL approach, as in this approach each individual node uses the reinforcement learning technique, to learn the stationary solution of the game. 

\section{\textcolor{black}{MF-MARL for Distributed Power Control}}
\label{sec:MF_MARL_Algo}
In this section, we present our mean-field MARL approach to learn the online power control policies to maximize the throughput of a fading EH MAC with large number of users. We show that the proposed approach enables the distributed learning of the power control policies which eventually converge to the \emph{stationary} Nash equilibrium. The proposed MF-MARL algorithm exploits the fact that discrete time finite state MFGs have the fictitious play property (FPP) \cite{SHadikhanloo_PhDTheis_Jan2018}. The FPP for a discrete time MFG is described in the following. Let $m$ denote the iteration index and $\boldsymbol{\bar\pi}_1$ denote an arbitrary probability vector representing the initial distribution of the nodes across the states. Let

\begin{align}
\mathcal{F}_m^* &\triangleq \arg \max_{\mathcal{F}} V_m\left(\boldsymbol{\bar\pi}_m,\mathcal{F}\right),\label{eq:FPP1}\\
\boldsymbol{\pi}_{m+1} &= \mathcal{K}_{V_m(\mathcal{F}_m^*)}(\boldsymbol{\pi}_{m}),\label{eq:FPP2}\\
\text{ and } \boldsymbol{\bar \pi}_{m+1} &=\frac{m}{m+1}\boldsymbol{\bar\pi}_m+\frac{1}{m+1}\boldsymbol{\pi}_{m+1}\label{eq:FPP3}.
\end{align}
The procedure described by \eqref{eq:FPP1}, \eqref{eq:FPP2} and \eqref{eq:FPP3} is called the fictitious play procedure. \textcolor{black}{As described in \eqref{eq:FPP1}, at the $m^{\text{th}}$ iteration, a node attempts to learn the Nash maximizer, $\mathcal{F}_m^*$, given that its belief about the distribution of the nodes across the states is $\boldsymbol{\bar\pi}_m$. Based on the Nash maximizer learned at the $m^{\text{th}}$ iteration, $\mathcal{F}_m^*$, the belief about the distribution is updated to $\boldsymbol{\bar\pi}_{m+1}$, using \eqref{eq:FPP2} and \eqref{eq:FPP3}. Next, at the $(m+1)^{\text{th}}$ iteration, each node attempts to learn the Nash maximizer, $\mathcal{F}_{m+1}^*$.} A discrete-time MFG is said to have FPP  if and only if the procedure described by \eqref{eq:FPP1}, \eqref{eq:FPP2} and \eqref{eq:FPP3} \emph{converges}. The following result provides the conditions under which the fictitious play procedure converges to the unique stationary solution of the discrete-time MFG.    
\begin{thm}[Convergence of FPP to unique stationary solution (Theorem 4.3.2 \cite{SHadikhanloo_PhDTheis_Jan2018})]
	Let $(\boldsymbol\pi_m,V_m)$ denote the sequence generated through the FPP. If a MFG has a unique Nash maximizer at each stage of the game and the reward function is continuous and monotone with respect to probability vector $\boldsymbol\pi$ then the sequence $(\boldsymbol\pi_m,V_m)$ converges to $(\tilde{\boldsymbol\pi},\tilde{V})$ the unique stationary solution of the MFG.
	\label{thm:FPP_convergence}
\end{thm}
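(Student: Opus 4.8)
The plan is to reconstruct the fictitious play convergence argument in the spirit of \cite{SHadikhanloo_PhDTheis_Jan2018}, organized around a monotonicity-driven Lyapunov estimate. First I would record that the hypotheses make the iteration well posed: because there is a unique Nash maximizer at each stage, the arg-max in \eqref{eq:FPP1} is single valued, so $\mathcal{F}_m^*$, $\boldsymbol{\pi}_{m+1}$ and $\boldsymbol{\bar\pi}_{m+1}$ are uniquely determined by \eqref{eq:FPP1}--\eqref{eq:FPP3}, and the whole sequence $(\boldsymbol\pi_m,V_m)$ is well defined. Two compactness facts are then available for free: the beliefs $\boldsymbol{\bar\pi}_m$ all lie in the probability simplex, which is compact, and since the per-slot reward \eqref{eq:reward} is bounded on the finite state/action sets, the associated value functions $V_m$ lie in a fixed bounded, hence precompact, set. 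Finally, by Theorem~\ref{thm:unique_stationary_solution} the monotone game has at most one stationary solution, so it suffices to prove that the sequence has a convergent subsequence whose limit is a stationary solution, and that every such subsequential limit coincides.

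The heart of the argument is a Ces\`aro-decay estimate for the \emph{exploitability} gap. Define
\[
\varepsilon_m \triangleq V_m\!\left(\boldsymbol{\bar\pi}_m,\mathcal{F}_m^*\right) - V_m\!\left(\boldsymbol{\bar\pi}_m,\boldsymbol{\bar{\mathcal F}}_m\right),
\]
where $\boldsymbol{\bar{\mathcal F}}_m$ is the running-averaged policy associated with the averaged belief $\boldsymbol{\bar\pi}_m$; since $\mathcal{F}_m^*$ is the maximizer in \eqref{eq:FPP1} we have $\varepsilon_m\ge 0$, with $\varepsilon_m=0$ exactly when $\boldsymbol{\bar\pi}_m$ is a best-response fixed point. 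I would then compare the value of the averaged policy against consecutive beliefs $\boldsymbol{\bar\pi}_m$ and $\boldsymbol{\bar\pi}_{m+1}$, using the averaging rule \eqref{eq:FPP3} to expand the difference and the distribution recursion \eqref{eq:dsitribution_evolution}/\eqref{eq:FP_operator} to rewrite the flow term. Applied to the pair $(\boldsymbol{\pi}_{m+1},\boldsymbol{\bar\pi}_m)$, the monotonicity inequality \eqref{eq:reward_function_monotonicity_dist} forces the bilinear coupling term produced in this expansion to have a definite sign, yielding an estimate of the form
\[
\Psi_{m+1}\le \Psi_m-\tfrac{c}{m+1}\,\varepsilon_m+\tfrac{C}{(m+1)^2}
\]
for a bounded potential $\Psi_m$ and constants $c,C>0$. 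Summing over $m$ gives $\sum_m \tfrac{\varepsilon_m}{m+1}<\infty$, and since the $\varepsilon_m$ are uniformly bounded a standard averaging lemma then yields $\tfrac{1}{M}\sum_{m=1}^{M}\varepsilon_m\to 0$.

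With the Ces\`aro decay in hand, the closing argument is standard. By compactness I extract a subsequence along which $\boldsymbol{\bar\pi}_{m_j}\to\tilde{\boldsymbol\pi}$ and $V_{m_j}\to\tilde V$. Uniqueness of the Nash maximizer together with Berge's maximum theorem (the objective $V(\boldsymbol\pi,\mathcal F)$ is continuous and the action set $\mathcal{A}_p$ is compact) makes the best-response map continuous, so the exploitability functional $\boldsymbol\pi\mapsto\varepsilon(\boldsymbol\pi)$ is continuous; since its Ces\`aro average vanishes, the limit satisfies $\varepsilon(\tilde{\boldsymbol\pi})=0$. Thus $\tilde{\boldsymbol\pi}$ is a best-response fixed point and, together with $\tilde V$, satisfies the coupled fixed-point equations \eqref{eq:bellman_operator}--\eqref{eq:FP_operator}, i.e.\ $(\tilde{\boldsymbol\pi},\tilde V)$ is a stationary solution. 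By Theorem~\ref{thm:unique_stationary_solution} this stationary solution is unique, so every convergent subsequence has the same limit and the full sequence $(\boldsymbol\pi_m,V_m)$ converges to it.

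I expect the main obstacle to be the Ces\`aro-decay step: one must expand the change in the averaged-policy value across a single fictitious-play update precisely enough that the monotone term \eqref{eq:reward_function_monotonicity_dist} is isolated with the correct sign and shown to dominate the $O(1/(m+1)^2)$ error coming from the squared weight in \eqref{eq:FPP3}, so that a genuine supermartingale-type potential $\Psi_m$ emerges. A secondary, more technical point is verifying continuity of the best-response map carefully enough to pass the exploitability to the limit; single-valuedness is immediate from the uniqueness hypothesis, but upgrading to continuity requires checking the joint continuity and compactness hypotheses of the maximum theorem for the recursively defined value function in \eqref{eq:value_function}.
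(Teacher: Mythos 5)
You should know at the outset that the paper contains no proof of this statement to compare against: Theorem~\ref{thm:FPP_convergence} is imported verbatim from Theorem~4.3.2 of \cite{SHadikhanloo_PhDTheis_Jan2018}, and the paper explicitly states that proofs of the quoted MFG results are omitted for brevity. So your reconstruction must be judged against the cited thesis, whose fictitious-play analysis your outline does resemble: well-posedness from single-valuedness of the arg-max, compactness of the simplex and boundedness of the values, an exploitability gap $\varepsilon_m\ge 0$ controlled through a monotonicity-driven estimate, and a weighted-summability step. Your passage from $\sum_m \varepsilon_m/(m+1)<\infty$ to $\frac{1}{M}\sum_{m\le M}\varepsilon_m\to 0$ is correct (Kronecker's lemma), and your use of Berge's maximum theorem plus uniqueness to upgrade upper hemicontinuity of the best response to continuity is also sound.

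There is, however, a genuine logical gap in your endgame. Ces\`aro decay of a nonnegative sequence gives only $\liminf_m \varepsilon_m = 0$: it produces \emph{one} subsequence along which the exploitability vanishes and hence one subsequential limit that is a stationary solution. Your closing claim that ``every convergent subsequence has the same limit'' does not follow from this together with uniqueness, because a subsequence along which $\varepsilon_{m_j}\ge\delta>0$ could a priori converge to a non-stationary point, and nothing in your argument excludes that. The standard repair, which is also how the source closes the argument, is to prove full decay $\varepsilon_m\to 0$ by combining the weighted summability with a slow-variation estimate $\lvert\varepsilon_{m+1}-\varepsilon_m\rvert\le C/(m+1)$; the latter follows because the averaging rule \eqref{eq:FPP3} forces $\Vert\boldsymbol{\bar\pi}_{m+1}-\boldsymbol{\bar\pi}_m\Vert\le 2/(m+1)$ and the value and best-response value are Lipschitz in the belief on the compact simplex. (If $\varepsilon_{m_k}\ge\delta$ infinitely often, slow variation keeps $\varepsilon_m\ge\delta/2$ on windows of length proportional to $m_k$, each contributing at least a fixed amount to $\sum_m\varepsilon_m/m$, a contradiction.) With $\varepsilon_m\to 0$ every subsequential limit is stationary, and Theorem~\ref{thm:unique_stationary_solution} then yields convergence of the whole sequence. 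Two further points deserve tightening: your ``running-averaged policy'' $\boldsymbol{\bar{\mathcal F}}_m$ is not well defined without specifying that it is the relaxed (mixed) average of past best responses, and your Lyapunov potential $\Psi_m$ is asserted rather than constructed -- you flag this honestly, but since the monotonicity hypothesis \eqref{eq:reward_function_monotonicity_dist} enters the proof only through that estimate, the proposal as written defers precisely the step that makes the theorem true.
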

For the throughput maximizing MFG $\mathcal{G}_T$, convergence of the FPP to the stationary solution of the game directly follows from the above result and Theorem~\ref{thm:unique_stationary_soln_GT}. As a consequence of this result, the stationary solution of the MFG $\mathcal{G}_T$ can be learned through the fictitious play procedure, provided  the Nash maximizer can be found at each iteration of the fictitious play procedure, and the belief about the distribution is updated correspondingly. The MF-MARL proposes to use the reinforcement learning to learn the Nash maximizer at each iteration, i.e., for a given belief distribution $\bar{\boldsymbol \pi}$ each node \textcolor{black}{individually} uses a reinforcement learning algorithm to learn the Nash maximizer. The proposed MF-MARL approach is described in Algorithm~\ref{algorithm:MF_MARL}. 
\begin{algorithm}[t]
	\caption{: MF-MARL approach to learn online policies}
	%\vspace{0.5cm}
	
	%\vspace{0.5cm}
	\begin{algorithmic}
		\State {\bf Initialize}: ${\boldsymbol{\bar \pi_1}}$ to a valid probability vector, $\epsilon_1$, $\tilde{\epsilon}$, $T$ and $m\gets 0,n\gets 0$
		\Do
		\setdefaultleftmargin{10cm}{0.5cm}{}{}{}{}
		\ben
		\item \textcolor{black}{Set $m\gets m+1$}; \textcolor{black}{during each iteration}, at each node execute Q-learning algorithm to learn Nash maximizer ${\mathcal{F}_m^k}^*$
		\item In the $n^{\text{th}}$ time-slot, $n\leq T$, \textcolor{black}{of the $m^{\text{th}}$ iteration} each node takes an action according to current policy obtained through Q-learning, and the AP estimates $\boldsymbol{\pi_{m_n}}$. 
		\item If  $\Vert{ \boldsymbol \pi_{m}}-{\boldsymbol \pi_{m_{n}}}\Vert_2\geq \epsilon_1$ or $n>T$, broadcast $\boldsymbol{\pi_{m+1}}= \boldsymbol\pi_{m_{n}} $; \textcolor{black}{else go to step 2 and set $n\gets n+1$.}
		\item Update  $\boldsymbol{\bar \pi}_{m+1}$ using \eqref{eq:FPP3} and $n\gets 0$. 
		%		\item Set ${\bf L}\gets {\bf L}^{(p+1)}$
		%		\item Go to step 1, and use \textcolor{black}{${\bf L}^{(p+1)}$} obtained in step~4.
		\een
		\doWhile{$\Vert{\boldsymbol{\bar \pi_{m+1}}}-{\boldsymbol{\bar \pi_{m}}}\Vert_2\leq \tilde{\epsilon}$.}\\
		\textbf{Output:} The stationary Nash maximizer policies and distribution are given by $\mathcal{F}^*$ and $\tilde{\pi}$, respectively.
	\end{algorithmic}\label{algorithm:MF_MARL}
\end{algorithm}

\textcolor{black}{Note that, in the Alogrithm~\ref{algorithm:MF_MARL}, ${\boldsymbol \pi}_{m_n}$ denotes the distribution of the nodes across the states, in the $n^{\text{th}}$ slot of the $m^{\text{th}}$ iteration. Further, ${\mathcal{F}_m^k}^*$ denotes the Nash maximizer policy of the $k^{\text{th}}$ node, at the $m^{\text{th}}$ iterations. The maximum duration of each iteration of Algorithm~\ref{algorithm:MF_MARL} is set to $T$. However, in the $n^{\text{th}}$ slot of the $m^{\text{th}}$ iteration, where $n<T$, the AP can terminate the current iteration by broadcasting the belief about the mean-field distribution $\boldsymbol{\pi}_{m_n}$, depending on the update rule in Step 3 of Algorithm~\ref{algorithm:MF_MARL}, i.e., when the  previous belief of the nodes about the distribution, $\boldsymbol{\bar \pi_{m}}$, is outdated. Note that, at the start of each new iteration the Q-values are initialized with the Q-values at the end of previous iteration. } 

In order to implement the Q-learning algorithm, a node requires to know the reward, i.e., the sum-throughput, obtained in each slot. Since the reward function is same across the nodes, this could be accomplished by using an estimate of the distribution \textcolor{black}{in \eqref{eq:reward}}. In particular, each node uses its own policy and an estimate of the distribution to build an estimate of the reward obtained in each slot. Alternatively, in each slot the AP can directly broadcast the total number of bits successfully decoded by it. The latter method obviates the need to estimate the distribution of the nodes, albeit at a cost of higher feedback overhead. The latter method is essentially cooperative multi-agent Q-learning\cite{Sunehag_ICAAMS_2018} where nodes attempt to maximize a common reward function. In our simulations it is observed that the proposed MF-MARL based approach performs marginally better than the cooperative multi-agent Q-learning method. In steps~$2$ and $3$ of the Algorithm~\ref{algorithm:MF_MARL}, the AP builds an estimate\footnote{Since transmit power used by a node determine the state of the node, in each slot, the AP can estimate the state of each node based on the transmit power.} of ${\boldsymbol \pi}_{m_n}$ and periodically broadcasts it to the entire network. In the simulations, presented in Sec.~\ref{sec:sim}, we use the empirical distribution as an estimate of ${\boldsymbol \pi}_{m_n}$.  

\subsection{Implementation via Deep Reinforcement Learning} \textcolor{black}{At each node, we implement the reinforcement learning algorithm using the deep Q-learning\cite{Mnih2015_DQN} method where the Q-function is approximated using a deep neural network (DNN). In order to learn the Q-function, the DNN is successively trained using the problem data, and a fixed target network which provides the reference Q-values. The target Q-network is periodically updated using the weights of the current Q-network. For further details on the deep Q-learning with fixed target Q-networks we refer the readers to \cite{Mnih2015_DQN}.} This approach of using a DNN to learn Q-function has the following advantages: $(i)$ it obviates the need to discretize the state space, as the Q-function approximation learned using the DNN is continuous over the state space, whereas in conventional approach it is learned for discrete state-action pairs, and (ii) it is inherently faster, compared to the conventional approach of implementing the Q-learning. This is because for a given state the Q-function corresponding to all the actions is learned simultaneously. We also note that in the first and second step of Algorithm~\ref{algorithm:MF_MARL}, the use of Q-learning could be replaced by any other variant of reinforcement learning schemes, e.g., actor-critic algorithm. 

In the following section, we compare both the centralized and distributed approach from the energy consumption perspective, discuss their feasibility for the low power sensor nodes, and adapt the centralized DNN based approach developed in this section to develop an energy efficient distributed implementation. 
\section{Energy Efficient Distributed Power Control}
\label{sec:energy_compare}
First, we compare the energy required for implementation of both the methods, \textcolor{black}{proposed in the previous sections}. In order to do this, it is important to understand the energy consumption of a DNN\cite{Chen_sysML_feb2018}. As observed in the previous sections, the design of a DNN involves several hyper-parameters, e.g., number of layers, number of nodes in each layer, length of weight vectors for each node, etc., which are conventionally chosen to improve the accuracy of the DNN. These parameters also affect the energy consumption of a DNN, which also depends on the algorithm being implemented by the DNN\cite{Yang_Asilomar_Oct2017}. Thus, the energy consumed by the DNN is a complicated function of these parameters and is not possible to compute it beforehand. Note that, the energy consumed by a DNN is determined by not only the number of multiplication-and-accumulation  (MAA) operations that need to be performed by a DNN, but also by the memory hierarchy and the data movement\cite{Yang_Asilomar_Oct2017}. Indeed, as shown in Table~\ref{table_DNN_energy}, the energy consumption of a DNN is overwhelmingly dominated by the energy consumed for data movement. For instance, the energy cost incurred by a single dynamic RAM access is 200 times more than a MAA operation. Even an access to local on-board memory costs more than a MAA operation. Thus, an algorithm which requires a high number of memory accesses will incur a larger energy cost, in comparison to an algorithm which does not require any memory access during runtime.  
\begin{table}[t!]
	\renewcommand{\arraystretch}{1.3}
	%with EH transmitter and non-EH receive
	\caption{An example of normalized values for energy consumption of memory access and computation \cite{Yang_Asilomar_Oct2017}. Here, the arithmetic and logic unit (ALU) contains register file (RF). The size of RF is smaller than a the processing engine (PE), which, in turn is smaller than a global buffer. The dynamic RAM (DRAM) is the largest among the all and is external to a DNN \cite{Sze_Proc_of_IEEE_Dec2017}.   }   %All the numbers in the following table are rounded to the first digit after decimal point.}
	\label{table_DNN_energy}
	\centering
	\begin{tabular}{|c|c|}
		\hline
		%\multicolumn{19}{|c|}{ }\\
		\begin{tabular}[x]{@{}c@{}}Hierarchy of Memory Access\end{tabular} &  \begin{tabular}[x]{@{}c@{}}Normalized energy cost\end{tabular}\\
		\hline
		MAA  &  1\bf{x}\\ 
		\hline
		RF $\to $ ALU  &  1\bf{x} \\
		\hline
		PE $\to $ ALU &   2\bf{x}  \\
		\hline
		Buffer $\to $ ALU  &  6\bf{x}   \\
		\hline
		DRAM $\to $ ALU  &  200\bf{x}  \\
		\hline
		%		$9$  &  4.0407  \\
		%		\hline
		%$10$  &  4.0941 & 3.5243 & 86.08\%\\
		%\hline
	\end{tabular}
\end{table}

We note that, unlike the deep Q-learning, in the centralized approach the DNN is trained only once, and the training data needs to be generated only once by solving multiple instantiations of the offline problem. Thus, for the centralized approach the DNN can be trained in the cloud, before deploying the trained DNN in an EHN. On the other hand, for deep Q-learning the DQN is trained successively during the operation of the algorithm. In addition, the deep Q-learning also requires to maintain a memory buffer, for experience replay, which is essential for the stability of the algorithm. This further adds to the energy cost of the deep Q-learning algorithm. 

For the centralized approach, once the DNN is trained and deployed, the proposed online power control policy only requires to perform MAA operations to generate the transmit power vector. In particular, it requires $\sum_{j=1}^{h+2}N_jN_{j-1}$ multiplications. Thus, it requires no external memory access for its operation which makes it more favorable for EHNs, compared to deep Q-learning. The number of MAA operations required to compute the output transmit power vector using the centralized approach can be further optimized by the use of model-compression methods \cite{Ye_Arxiv_2018, Yiwen_NIPS_2016} which attempt to further reduce the number of neurons and connections in the DNN, without compromising its accuracy. However, we emphasize that the design of DNN-based centralized policies is only possible in the scenarios where some a-priori knowledge about the EH process and the channel state is available. Thus, unlike the MF-MARL approach proposed in Sec.~\ref{sec:MF_MARL_Algo}, the DNN based policies can \emph{not} be used in the scenarios where no knowledge about the EH process and the channel state is available. However, due to the aforementioned energy concerns, it would be desirable to have a distributed implementation of the DNN based centralized approach, developed in the previous section,  for the applications where offline information about the EH process and the channel state is available.  The following  subsection describes how the proposed DNN based centralized online power control scheme could be modified to develop an energy efficient distributed power control policy. 

\subsection{A Low Energy Cost Decentralized Policy for EHNs }
\textcolor{black}{We note that, for the centralized DNN-based online policy, proposed in Sec. III, once the DNN is trained and deployed, the input vector to the trained DNN is constituted by the state of all the nodes in the network. A distributed implementation of this scheme could be facilitated by deploying the trained DNN, obtained after the centralized training, at the individual nodes. However, to locally determine the transmit powers at the nodes, each node would require to know the values of the energy harvested, battery and channel states of all the other nodes. Thusm the amount of overhead involved in the exchange of global state information across the network forbids the distributed implementation of the centralized approach. However, as observed for the MF-MARL algorithm, the optimal transmit power of a node only depends on the other nodes through distribution of nodes across the states, $\boldsymbol{\pi}$. The proposed distributed DNN based approach circumvents this problem by sampling the states of other nodes from the distribution of the nodes across the states, denoted by $\boldsymbol{\pi}$. Intuitively, given the trained DNN deployed at each EHN, the optimal performance can be obtained by constructing the input vector to the DNN by sampling the states of other nodes from the distribution $\boldsymbol{\pi}$. In particular, in the $n^{\text{th}}$ slot, the input vector at the $k^{\text{th}}$ EHN can be generated as $\left(e_{s_n}^{1}, B_{s_n}^{1}, g_{n_s}^{1},\ldots, e_n^k,B_n^k,g_n^k,\ldots e_{s_n}^K,B_{s_n}^K,g_{s_n}^K\right)$, where $(e_{s_n}^{1}, B_{s_n}^{1}, g_{n_s}^{1})$ denotes the state of the first node, sampled from the distribution $\boldsymbol\pi$ and $(e_n^k,B_n^k,g_n^k)$ denotes the state of the $n^{\text{th}}$  node.}

\textcolor{black}{ Note that, for this distributed DNN approach, unlike MF-MARL where the policy is updated at each iteration, the policy is fixed
	for the entire duration of the operation. However, similar to MF-MARL, the distribution $\boldsymbol{\pi}$ is estimated and updated according to \eqref{eq:FPP3}, and is periodically broadcasted by the AP. Thus, for this policy, given a fixed trained DNN at each node only the distribution $\boldsymbol{\pi}$ evolves over time which is also guaranteed to converge while operating under a fixed policy. This follows from the finite state space of the game which, under a fixed policy, results in a positive recurrent Markov chain, provided the EH process and the wireless channel are stationary and ergodic.}

Also, we explicitly observe that, although distributed, this approach still requires the generation of a training set for the off-line training of the DNN, which in turn requires to know several realizations of the channel and EH processes for all nodes beforehand. Instead, this is not required by the proposed MF-MARL method. In the following, we present the numerical results. 
\section{Numerical Results}
\label{sec:sim}
We consider an EH MAC with $K=5$ EH transmitters where each EHN harvests energy according to a non-negative truncated Gaussian distribution with mean $m$ and variance $v=3.5$, independently of the other nodes. The capacity of the battery at each transmitter is $B_{\max}=20$ and the maximum amount of energy allowed to be used for transmission in a slot is $P_{\max}=15$. Note that, the unit of energy is $10^{-2}$ J. In the following, we first describe the architecture of the DNN and the training setup used for learning the centralized and DQN policy. 
%proposed in
\subsection{DNN Architecture and Training}
To learn the centralized policy, we use a DNN with an input and output layer containing $3K$ and $K$ neurons, respectively. It consists of $30$ hidden layers, with first hidden layer containing $30K$ neurons. Each subsequent odd indexed hidden layer contains the same number of neurons as the previous even indexed layer, i.e., $N_j=N_{j-1}$ for $j\in\{3,\ldots,31\}$. For each even indexed hidden layer the number of neurons is decreased by $2K$, i.e., $N_{j}=N_{j-1}-2K$ for $j\in \{4,\ldots,30\}$. We note that, the input layer has the index $1$, and the indices of the first hidden layer and the output layer are $2$ and $32$, respectively. The activation function used is Leaky rectified linear unit (ReLu). To train the network we use the mean-square error as the loss function. Training data is generated by solving $10^4$ instantiations of the offline problem with the horizon length $N=20$. Thus, the training dataset contains $2 \times 10^5$ datapoints, out of which $40000$ data points are used for validation. The performance is evaluated by computing the rate per slot (RPS) over $10^6$ slots. For these $10^6$ slots, instantiations of the EH process and the channel are generated independently of the training data.    

At each node, both the deep Q network as well as the the fixed target network consist of $10$ hidden layers and one input and output layer. The input layer contains $3$ neurons, while the number of neurons in the output layer is equal to $|\mathcal{A}|=151$, where $\mathcal{A}=\{0,0.1,0.2,\ldots,15\}$. The first, third, fifth, seventh, and ninth hidden layer consists of $60, 58, 56, 54,$ and $52$ neurons, respectively. As for the DNN architecture used in the centralized approach, the number of neurons in each even indexed hidden layer remains same as in the previous odd indexed hidden layer. At each layer, except the output layer, the rectified linear unit (ReLu) is used as an activation function. The output layer uses a linear activation function, motivated by the fact that using an activation function that applies cut-off values could result in low training errors simply because the output power would be artificially constrained to lie in the interval $[0,P_{max}]$ and not as a result of a proper configuration of the hidden layers. Instead, a linear output activation function allows the DNN to learn whether the adopted  configuration of the hidden layers is truly leading to a small error or whether it needs to be still adjusted  through further training. 

The deep Q-learning algorithm uses $\gamma=0.99$, and uses the exploration probability  $\epsilon_{\max}=1$ at the start which decays to $\epsilon_{\min}=0.01$ with a decay factor equal to 0.995. The replay memory of length $2000$ is used. For all the experiments, DQN is trained with a batch size equal to 32. In Algorithm~\ref{algorithm:MF_MARL}, we use $\epsilon_1 = 0.01 $, $\tilde\epsilon =0.001$, and update frequency $T=1000$. %The performance of MF-MARL agents is evaluated by averaging the sum-throughput obtained over $10^5$ slots.

\subsection{Performance of Centralized Policy}
We first benchmark the performance of the proposed DNN based centralized online policy, against the performance of the optimal offline policy proposed in \cite{Wang_JSAC_Mar2015}. In the centralized scheme, the online policy is learned by training a deep neural network using the data obtained by jointly optimal offline policies \cite{Wang_JSAC_Mar2015}.

\begin{table}[t!]
	\renewcommand{\arraystretch}{1.3}
	%with EH transmitter and non-EH receiver }
	\caption{Performance of the DNN based policy for an EH MAC with $K=5$ users and $v=3.5$. Performance of the offline policy corresponds to 100\%. }   %All the numbers in the following table are rounded to the first digit after decimal point.}
	\label{table_DNN_perf}
	\centering
	\begin{tabular}{|c|c|c|c|c|}
		\hline
		%\multicolumn{19}{|c|}{ }\\
		\begin{tabular}[x]{@{}c@{}}Mean\\ (m)\end{tabular} &  \begin{tabular}[x]{@{}c@{}}Offline Policy\\ (RPS in nats)\end{tabular} & \begin{tabular}[x]{@{}c@{}}DNN policy\\ (RPS in nats)\end{tabular} & \begin{tabular}[x]{@{}c@{}}DNN policy\\ (Percentage )\end{tabular}\\
		\hline
		$4$  &  3.4907 &  3.1498 &  90.23\%\\ 
		\hline
		$5$  &  3.6564 &  3.3107 & 90.54\%\\
		\hline
		$6$  &   3.7877 & 3.4410 & 90.84\% \\
		\hline
		$7$  &  3.8922 & 3.5102 & 90.18\%  \\
		\hline
		$8$  &  3.9740 & 3.6146 & 90.95\% \\
		\hline
		$9$  &  4.0407 & 3.5676& 88.29\% \\
		\hline
		%$10$  &  4.0941 & 3.5243 & 86.08\%\\
		%\hline
	\end{tabular}
\end{table}

%\begin{table}[h!]
%	\renewcommand{\arraystretch}{1.3}
%	%with EH transmitter and non-EH receiver }
%	\caption{Performance of the DNN based online policy for an EH MAC with $K=5$ users and with variance of the EH process $v=8$. The performance is compared against the offline policy. The last column of the table shows the percentage of the throughput achieved by the DNN based policy. Performance of offline policy corresponds to 100\%. }   %All the numbers in the following table are rounded to the first digit after decimal point.}
%	\label{table_I}
%	\centering
%	\begin{tabular}{|c|c|c|c|}
%		\hline
%		%\multicolumn{19}{|c|}{ }\\
%		\begin{tabular}[x]{@{}c@{}}Mean\\ (m)\end{tabular} &  \begin{tabular}[x]{@{}c@{}}Offline Policy\\ (RPS in nats)\end{tabular} & \begin{tabular}[x]{@{}c@{}}DNN based policy\\ (RPS in nats)\end{tabular} & Percentage (\%)\\
%		\hline
%		$4$  &  3.5533 &   &  \\ 
%		\hline
%		$5$  &  3.6803 &   & \\
%		\hline
%		$6$  &   3.7926 &  &  \\
%		\hline
%		$7$  &  3.8901 &  &   \\
%		\hline
%		$8$  &  3.9694 &  &  \\
%		\hline
%		$9$  &  4.0339 & & \\
%		\hline
%		$10$  &  4.0883 & &\\
%		\hline
%	\end{tabular}
%\end{table}
Table~\ref{table_DNN_perf} shows the performance of the proposed DNN based policy. The last column of the table presents the RPS as the percentage of the throughput achieved by the offline policy. It can be observed that the proposed policy achieves roughly $90\%$ of the throughput obtained by the offline policy. We note that, since an offline policy is designed using non-causal information, the proposed policy can \emph{not}  achieve the throughput obtained by the optimal offline policy. Note that, the MDP formulation of this problem is computationally intractable, due to \emph{state space of the size} of order $10^{12}$, even with the channel gains quantized to just $8$ levels. %Due to lack of space, we omit the comparison of the proposed approach against the deep Q-learning based policies. It will be presented in the longer version of the paper.  

Table~\ref{table_P2P_mean10} compares the performance of the proposed DNN based policy against deep Q-learning and the MDP, for point-to-point links, i.e., $K=1$, with mean $m=10$. The proposed DNN based policy achieves approximately $98$~\% of the time-averaged throughput achieved by the offline policy. It is interesting to note that the throughput achieved by the proposed DNN based policies is \emph{marginally better} than the throughput achieved by the online policies designed using the deep Q-learning. Also, the proposed DNN based policies outperforms the MDP based policies which achieves only approximately $84$~\% of the throughput achieved by the offline policy. Theoretically, an online policy designed using the  MDP achieves the optimal performance. However, the performance of MDP policy degrades due to quantization of the state and action spaces. We note that the computational complexity for solving an MDP increases in direct proportion to the number of quantization levels used for state and action spaces. On the other hand, the proposed DNN based policy operates with continuous state and action spaces. In contrast, as shown in Table~\ref{table_policy_quantization}, while the DQN uses continuous state at the input, the output of DQN network is quantized which, in turn, results in performance loss, compared to the DNN-based policy.
Next, we compare the performance of the proposed MF-MARL 
and cooperative Q-learning approaches against the DNN based centralized and distributed policy. %Each node individually executes deep Q-learning. 
%Also, in contrast to offline policy it operates using only causal information.  
%\begin{table}[h!]
%	\renewcommand{\arraystretch}{1.3}
%	%with EH transmitter and non-EH receiver }
%	\caption{Performance of the DNN based online policy for a point-to-point link with mean of the EH process $m=5$. The performance is compared against the offline policy and the MDP based online policy. For both the DNN and MDP online policies the performance is presented as the percentage of the throughput achieved by the DNN based policy. Performance of offline policy corresponds to 100\%. }   %All the numbers in the following table are rounded to the first digit after decimal point.}
%	\label{table_P2P_mean5}
%	\centering
%	\begin{tabular}{|c|c|c|c|}
%		\hline
%		%\multicolumn{19}{|c|}{ }\\
%		\begin{tabular}[x]{@{}c@{}}Variance\\ (v)\end{tabular} &  \begin{tabular}[x]{@{}c@{}}Offline Policy\\  (RPS in nats)\end{tabular} & \begin{tabular}[x]{@{}c@{}}DNN based Policy\\ (Percentage )\end{tabular} & \begin{tabular}[x]{@{}c@{}}MDP Policy\\ (Percentage )\end{tabular}\\
%		\hline
%		$1$  &  1.5524 & 96.03\%  & 90.24\% \\ 
%		\hline
%		$2$  &  1.5508 & 95.76\% & 89.84\%  \\
%		\hline
%		$3$  &   1.5488 & 98.77\% & 89.43\% \\
%		\hline
%		$4$  &  1.5507 & 95.17\% & 88.92\%\\
%		\hline
%		$5$  &  1.5537 & 95.28\% & 88.49\% \\
%		\hline
%		$6$  &  1.5574 & 95.12\% & 88.28\%\\
%		\hline
%	\end{tabular}
%\end{table}
%%\vspace{-0.02in}
\begin{table*}[t!]
	\renewcommand{\arraystretch}{1.3}
	%with EH transmitter and non-EH receiver }
	\caption{Performance of the DNN based online policy for a point-to-point link with $m=10$. The action space of DQN based policy is $\mathcal{A}\triangleq\{0,0.1,\ldots, 15\}$. On the other hand, the MDP based solution is obtained using the action space $\mathcal{A}\triangleq\{0,1,\ldots, 15\}$.}   %For online policies the performance is expressed as the percentage of the throughput achieved by the offline policy. All the numbers in the following table are rounded to the first digit after decimal point.}
	\label{table_P2P_mean10}
	\centering
	\begin{tabular}{|c|c|c|c|c|}
		\hline
		%\multicolumn{19}{|c|}{ }\\
		\begin{tabular}[x]{@{}c@{}}Variance\\ (v)\end{tabular} &  \begin{tabular}[x]{@{}c@{}}Offline Policy\\  (RPS in nats)\end{tabular} & \begin{tabular}[x]{@{}c@{}}DNN Policy\\ (Percentage )\end{tabular}& \begin{tabular}[x]{@{}c@{}}DQN Policy\\ (Percentage )\end{tabular} & \begin{tabular}[x]{@{}c@{}}MDP Policy\\ (Percentage )\end{tabular}\\
		\hline
		$1$  &  2.0434 & 98.41\%  & 95.56\% & 83.32\%  \\ 
		\hline
		$2$  &  2.0375 & 98.56\%  & 95.24\% & 83.60\%   \\
		\hline
		$3$  &   2.0372 & 98.38\% & 98.11\% & 83.32\%  \\
		\hline
		$4$  &  2.0347 & 95.85\% & 96.54\% & 83.37\%  \\
		\hline
		$5$  &  2.0310 & 97.72\% &95.28\% & 83.29\%  \\
		\hline
		$6$  &  2.0284 & 98.22\%  &98.18\% & 83.21\%  \\
		\hline
	\end{tabular}
\end{table*}

\begin{table*}[t!]
	\renewcommand{\arraystretch}{1.3}
	%with EH transmitter and non-EH receiver }
	\caption{Summary of the inputs and outputs of the policies compared in this section. Note that, since both the MF-MARL and the cooperative Q-learning uses deep Q-learning at each individual node, their input and output values are continuous and discrete, respectively  }   %All the numbers in the following table are rounded to the first digit after decimal point.}
	\label{table_policy_quantization}
	\centering
	\begin{tabular}{| *{3}{c|} }
		\hline
		%\multicolumn{19}{|c|}{ }\\
		Policy & Input & Output\\
		\hline
		Centralized DNN & Continuous & Continuous  \\
		\hline
		Distributed DNN  &  Discrete &  Continuous \\ 
		\hline
		%		MF-MARL  &  Continuous &  Discrete\\
		%		\hline
		%		Cooperative Q-learning  &   Continuous & Discrete\\
		%		\hline
		Deep Q-learning  &  Continuous & Discrete\\
		\hline
		MDP  &  Discrete & Discrete\\
		\hline
		%$9$  &  3.6166 & 3.5179 & 95.68\% &  3.4528  & 93.90\% & 3.1922 & 88.26\%\\
		%\hline
		%$10$  &  4.0941 & 3.5243 & 86.08\%\\
		%\hline
	\end{tabular}
\end{table*}
\subsection{Performance of Distributed Policies}

\begin{table*}[t!]
	\renewcommand{\arraystretch}{1.3}
	%with EH transmitter and non-EH receiver }
	\caption{Performance of the MF-MARL and cooperative multi-agent Q-learning approach for an EH MAC with $K=5$ users and $v=3.5$. Performance of the centralized policy corresponds to 100\%. }   %All the numbers in the following table are rounded to the first digit after decimal point.}
	\label{table_MARL}
	\centering
	\begin{tabular}{| *{8}{c|} }
		\hline
		%\multicolumn{19}{|c|}{ }\\
		\begin{tabular}[x]{@{}c@{}}Mean\\ (m) \end{tabular} &  \begin{tabular}[x]{@{}c@{}}Centralized Policy\\ (RPS in nats)\end{tabular} & \multicolumn{2}{c|}{\begin{tabular}[x]{@{}c@{}}MF-MARL policy\end{tabular}} & \multicolumn{2}{c|} {\begin{tabular}[x]{@{}c@{}}Cooperative Q-learning \end{tabular}} & \multicolumn{2}{c|} {\begin{tabular}[x]{@{}c@{}}Distributed DNN \end{tabular}}\\
		\cline{3-8}
		&  & RPS & \% & RPS & \% & RPS & \%  \\
		\hline
		$4$  &  3.1498 &  2.9390 & 93.30\% &  2.9354 & 93.19\% & 2.6788 & 85.04\%\\ 
		\hline
		$5$  &  3.3107 &  3.1311 & 94.57\% &  3.0046 & 90.75\% & 2.8918 & 87.34\%\\
		\hline
		$6$  &   3.4410 & 3.1072 & 90.29\% &  3.1852 & 92.56\%  & 3.0765 & 89.40\%\\
		\hline
		$7$  &  3.5102 & 3.2960 & 93.89\% &  3.2417 & 92.35\%  & 3.1388 & 89.41\%\\
		\hline
		$8$  &  3.6146 & 3.3973 & 93.98\% &  3.3064  & 91.47\% & 3.2518 & 89.96\%\\
		\hline
		$9$  &  3.6166 & 3.5179 & 95.68\% &  3.4528  & 93.90\% & 3.1922 & 88.26\%\\
		\hline
		%$10$  &  4.0941 & 3.5243 & 86.08\%\\
		%\hline
	\end{tabular}
\end{table*}

As observed from the results in Table~\ref{table_MARL}, the policies obtained using the proposed MF-MARL based approach achieve a sum-throughput which is close to the throughput achieved by the centralized policies. However, in contrast to the proposed approaches, the centralized online policy requires information about the state of all the nodes in the network. Note that, as shown in Table~\ref{table_policy_quantization}, in order to implement MF-MARL (or deep Q-learning), the actions space, $\mathcal{A}$, has to be quantized which leads to a loss in the throughput, compared to the centralized scheme where the output transmit powers are continuous. We observe that the proposed MF-MARL based approach performs marginally better than the cooperative multi-agent Q-learning based scheme. However, in contrast to the cooperative multi-agent Q-learning approach, the MF-MARL based procedure requires significantly less feedback. Also, it is interesting to note that the proposed MF-MARL algorithm achieves the near-optimal throughput even for a network with small number of nodes. 

Furthermore, we note that the distributed DNN approach proposed in Sec.~\ref{sec:energy_compare} also achieves throughput competitive to the MF-MARL approach. Both the MF-MARL and the distributed DNN approaches use the distribution vector $\boldsymbol{\pi}$ for their operation. The distribution vector $\boldsymbol{\pi}$ is estimated using the empirical distribution over the discretized (or quantized) state space. Since, in the distributed DNN approach, each EHN constructs the input vector to DNN by sampling the states of the other nodes from the distribution $\boldsymbol{\pi}$, hence the input states for other nodes are essentially sampled from the quantized state space (see Table~\ref{table_policy_quantization}). This is in contrast to the MF-MARL approach where the input states are continuous variables, however the output transmit power variables are quantized. Thus, similar to the MF-MARL algorithm, the distributed DNN approach also has a performance gap from the centralized policy, due to quantization, which reduces with a finer quantization. In the following, we study the impact of hyperparameters such as, update frequency, replay buffer size on the performance of the MF-MARL approach. We also present the results to show the speed of convergence of the MARL approaches.
\begin{figure}[t!]
	\centering
	\includegraphics[width=3.7in]{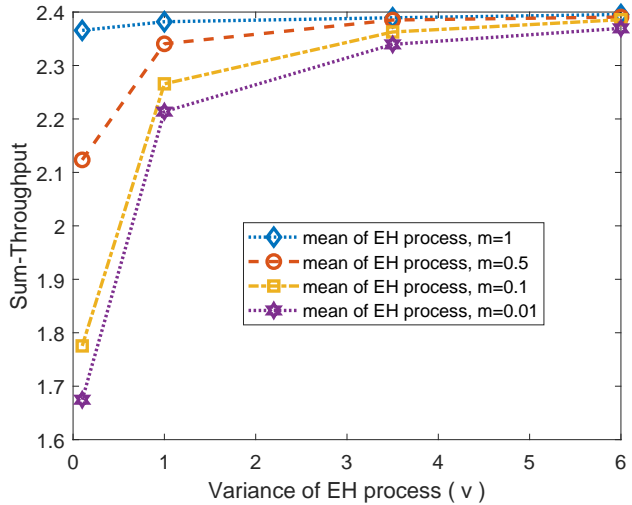}
	\caption{Sum-throughput achieved by MF-MARL for a fading EH MAC with $20$ users, i.e, $K=20$. The parameters used for this simulation are $B_{\max}=2$ and $P_{\max}=1$. Since the transmit power levels are quantized with a range of $0.1$, the transmit powers at each node are in the range $\{0,0.1 \cdots,1\}$.}
	\label{Fig:large_network}
\end{figure}

Further, the result in Fig.~\ref{Fig:large_network} illustrates the performance of the proposed MF-MARL approach for a fading EH MAC with $K=20$ users. In this scenario, the performance of the network is constrained by the limited capacity of the battery attached to the node. It is observed in our simulations that, even with $K=20$ nodes, our MF-MARL approach is able to learn the policies in a completely distributed fashion, and converges to a stable throughput. This could be observed by the fact that for $m=0.01$ the sum-throughput increases with the variance, i.e., the energy availability.  
\subsection{Convergence and Effect of Hyperparameters}     
Further, the results in Fig.~\ref{Fig:MF_MARL_covergence} show the throughput achieved by our MF-MARL algorithm as a function of slot index. It is interesting to observe that the MF-MARL algorithm converges very fast, i.e., within first $1000$ slots, for $m=7$ and $m=8$ the obtained throughput reaches within the $99\%$ of the throughput attained finally. Although, for $m=5$ and $m=9$ the MF-MARL learns at a relatively slower pace, yet within first $5000$ slots it achieves the throughput close to $95\%$ of the final value. A similar trend is observed for cooperative Q-learning.
\begin{figure}[t!]
	\centering
	\includegraphics[width=3.7in]{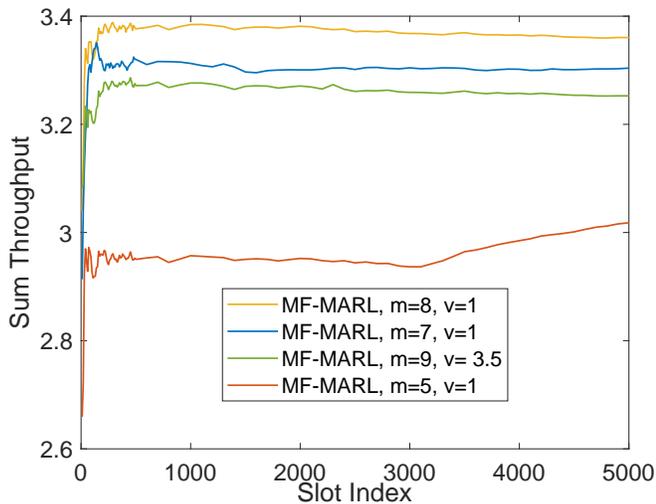}
	\caption{Convergence of MF-MARL algorithm. For $m=7$ and $m=8$, within first $1000$ slots the sum-throughput obtained by the MF-MARL approaches roughly $99\%$ of the value shown in Table~\ref{table_MARL}. Similarly, for $m=5$ and $m=9$ also, it takes approximately $5000$ slots for the MF-MARL to achieve a sum-throughput which is within $97\%$ and $94\%$ of the value attained finally. }, 
	\label{Fig:MF_MARL_covergence}
\end{figure}
\begin{figure}[t!]
	\centering
	\includegraphics[width=3.7in]{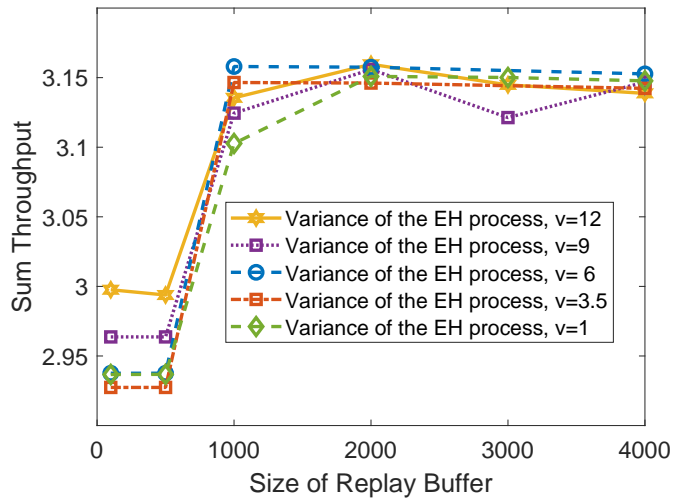}
	\caption{Impact of replay buffer size on the performance of MF-MARL. The mean of the harvesting process is $m=5$.}
	\label{Fig:MF_MARL_buffersize}
\end{figure}
\begin{figure}[t!]
	\centering
	\includegraphics[width=3.7in]{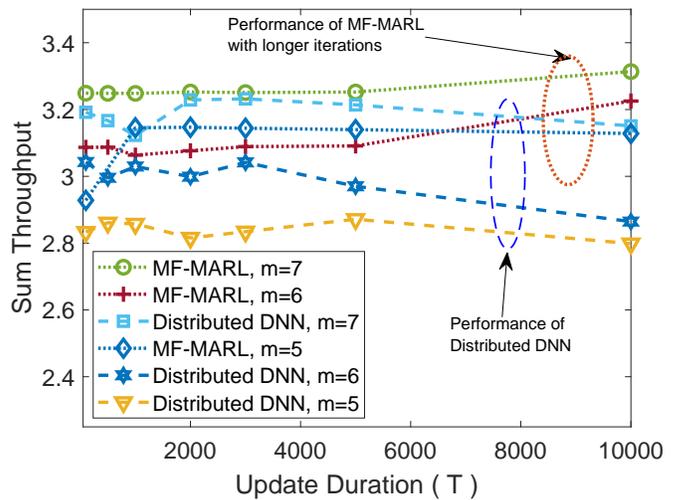}
	\caption{Impact of the update frequency on performance of the MF-MARL and the distributed DNN. The variance of the harvesting process is $v=3.5$. The results indicate that the performance of MF-MARL improves with longer update duration. On the other hand, performance of the distributed DNN based approach is relatively independent of the update duration.   }
	\label{Fig:MF_MARL_Update_frequency}
\end{figure}
The results shown in Fig.~\ref{Fig:MF_MARL_buffersize} illustrate the impact of the size of the replay buffer on the performance of the MF-MARL algorithm. From this plot it can be concluded that the size of replay buffer has a threshold effect on the throughput achieved by the MF-MARL algorithm. A small size replay buffer prohibits the algorithm from converging to the optimal throughput. However, beyond a sufficient size of the replay buffer the throughput does not improve further. The result in Fig.~\ref{Fig:MF_MARL_Update_frequency} shows the impact of parameter $T$, in the Algorithm~\ref{algorithm:MF_MARL}, on the sum-throughput achieved by the distributed policies. Recall that, $T$ determines the frequency with which the AP broadcasts the updates about the estimate of the distribution $\boldsymbol\pi$, hence is termed as update duration. For the MF-MARL, it is observed in the simulations that a longer update duration  result in an improved throughput. This is because the estimates obtained by computing the empirical distribution over larger number of slots are more accurate, which, in turn,  for the MF-MARL, leads to better estimates of the reward at each individual node. Consequently, it aids in the learning of Q-function, and leads to better DQN approximation. In contrast, performance of the distributed DNN approach is relatively independent of the update duration. This is because, the distributed DNN approach does not use the estimate of the distribution $\boldsymbol\pi$ for learning the policy, i.e., in the distributed DNN approach $\boldsymbol\pi$ is used only for sampling the states of the other nodes. Also, from \eqref{eq:FPP3}, regardless of the update frequency, over the time, iterative estimates of the distribution $\boldsymbol\pi$ converge, and the states of the other nodes are sampled from the correct distribution. In contrast, for the MF-MARL, estimates of $\boldsymbol \pi$ are critically used for learning, therefore the estimation inaccuracies jeopardize the learning procedure, and may adversely affect the sum-throughput.   

\begin{figure}[t!]
	\centering
	\includegraphics[width=3.7in]{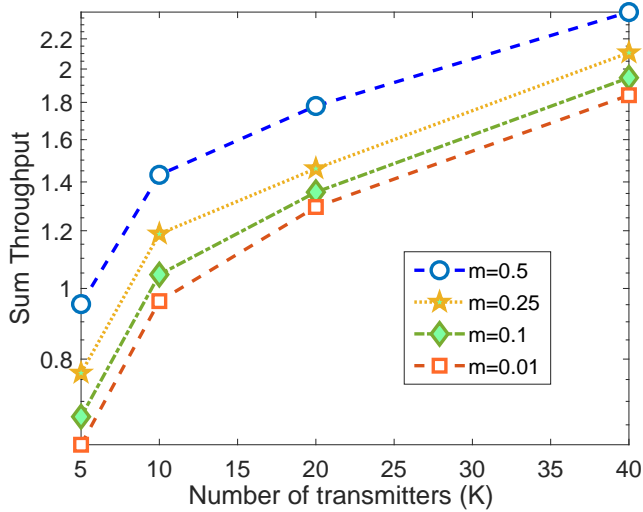}
	\caption{Sum throughput of the network as a function of number of nodes. Each transmitter contains a unit size battery, i.e., $B_{\max}^k=1$ for all $1\leq k\leq K$. The transmit power levels are restricted to binary levels, $\mathcal{A}=\{0,1\}$, and the variance of the harvesting process is $v=0.1$. }   
	\label{Fig:sum_throughput_number_nodes}
\end{figure}

The results in Fig.~\ref{Fig:sum_throughput_number_nodes} illustrate the variations in the sum-throughput achieved by the MF-MARL, as a function of the number of transmitters in the network. As expected, the sum-throughput of the network increases with both the number of transmitters in the network, $K$, as well as with the mean of the harvesting process, $m$. This shows the that the proposed MF-MARL apporach can learn  effectively, even in a network where the number of transmitters is large.      
\section{Conclusions}
\label{sec:Concl}
In this paper, we proposed a mean-field multi-agent reinforcement learning based framework to learn the optimal power control to maximize the throughput of large EH MAC.
First, we modeled the throughput maximization problem as a discrete-time MFG and analytically established that the game has a unique stationary solution. We proposed a reinforcement learning based procedure to learn the stationary solution of the game and established the convergence of the proposed procedure. Next, to benchmark the performance of the distributed power control policies, obtained using the proposed MF-MARL framework, we also developed a DNN based centralized online power control scheme. The centralized power control approach learns the optimal online decision rule using the data obtained through the solution of offline policies. The numerical results demonstrated that both the centralized as well as the distributed power control schemes achieve a throughput close to the optimal.  
\appendix
\section{Proof of Theorem~\ref{thm:unique_stationary_soln_GT}}
\label{app:proof_thm_unique_stationary_soln_GT}
\begin{proof}
	The proof follows directly from the result in Theorem~\ref{thm:unique_stationary_solution}, provided there exists a unique Nash maximizer and the reward function is monotone in variable $\boldsymbol{\pi}$. The uniqueness of Nash maximizer can established using the result in Theorem~\ref{thm:unique_Nash_maximizer}. It is easy to verify that the reward and value function of the game $\mathcal{G}_T$ satisfies the strictly diagonally concavity property. In order to complete the proof we just need to show that the reward function is monotone with parameter $\boldsymbol \pi$, i.e.,  
	\begin{align}
	\sum_{i=1}^d(\pi_i^2- \pi_i^1)(\mathcal{R}_i(\mathcal{P},\pi^2)-\mathcal{R}_i(\mathcal{P},\pi^1))\geq 0. 
	\end{align}	
The proof follows by noting the fact that since the reward obtained by a node does not depend on the state of the node, i.e., $\mathcal{R}_i(\mathcal{P},\pi^2)=\mathcal{R}(\mathcal{P},\pi^2)$. Hence, the RHS in the above can be expressed as 	$(\mathcal{R}(\mathcal{P},\pi^2)-\mathcal{R}(\mathcal{P},\pi^1))\left(\sum_{i=1}^d\pi_i^1-\sum_{i=1}^d\pi_i^1\right)=0$. 
\end{proof}

\bibliographystyle{IEEEtran}
{\bibliography{bibs/IEEEabrv,bibs/bibJournalList,bibs/references}}

% Generated by IEEEtran.bst, version: 1.14 (2015/08/26)
\begin{thebibliography}{10}
\providecommand{\url}[1]{#1}
\csname url@samestyle\endcsname
\providecommand{\newblock}{\relax}
\providecommand{\bibinfo}[2]{#2}
\providecommand{\BIBentrySTDinterwordspacing}{\spaceskip=0pt\relax}
\providecommand{\BIBentryALTinterwordstretchfactor}{4}
\providecommand{\BIBentryALTinterwordspacing}{\spaceskip=\fontdimen2\font plus
\BIBentryALTinterwordstretchfactor\fontdimen3\font minus
  \fontdimen4\font\relax}
\providecommand{\BIBforeignlanguage}[2]{{%
\expandafter\ifx\csname l@#1\endcsname\relax
\typeout{** WARNING: IEEEtran.bst: No hyphenation pattern has been}%
\typeout{** loaded for the language `#1'. Using the pattern for}%
\typeout{** the default language instead.}%
\else
\language=\csname l@#1\endcsname
\fi
#2}}
\providecommand{\BIBdecl}{\relax}
\BIBdecl

\bibitem{MSharma_ICASSP_2019}
M.~K. Sharma, A.~Zappone, M.~Debbah, and M.~Assaad, ``Deep learning based
  online power control for large energy harvesting networks,'' in \emph{Proc.\
  {ICASSP}}, May 2019.

\bibitem{MSharma_WiOpt_2019}
------, ``Multi-agent deep reinforcement learning based power control for large
  energy harvesting networks,'' in \emph{Proc. 17th Int. Symp. on Modeling and
  Optim. in Mobile, Ad Hoc, and Wireless Networks (WiOpt)}, 2019.

\bibitem{Centenaro_MWC_Oct2016}
M.~Centenaro, L.~Vangelista, A.~Zanella, and M.~Zorzi, ``Long-range
  communications in unlicensed bands: the rising stars in the {I}o{T} and smart
  city scenarios,'' \emph{{IEEE} Wireless Commun. Mag.}, vol.~23, no.~5, pp.
  60--67, Oct. 2016.

\bibitem{Ku_Comm_Tuts_2016}
M.~L. Ku, W.~Li, Y.~Chen, and K.~J.~R. Liu, ``Advances in energy harvesting
  communications: Past, present, and future challenges,'' \emph{{IEEE} Commun.
  Surveys Tuts.}, vol.~18, no.~2, pp. 1384--1412, Second Quarter 2016.

\bibitem{kaya_TWC_March2012}
K.~Tutuncuoglu and A.~Yener, ``Optimum transmission policies for battery
  limited energy harvesting nodes,'' \emph{{IEEE} Trans. Wireless Commun.},
  vol.~11, no.~3, pp. 1180--1189, Mar. 2012.

\bibitem{Wang_JSAC_Mar2015}
Z.~Wang, V.~Aggarwal, and X.~Wang, ``Iterative dynamic water-filling for fading
  multiple-access channels with energy harvesting,'' \emph{{IEEE} J. Sel. Areas
  Commun.}, vol.~33, no.~3, pp. 382--395, Mar. 2015.

\bibitem{MSharma_TWC_June2018}
M.~K. Sharma and C.~R. Murthy, ``Distributed power control for multi-hop energy
  harvesting links with retransmission,'' \emph{{IEEE} Trans. Wireless
  Commun.}, vol.~17, no.~6, pp. 4064--4078, Jun. 2018.

\bibitem{Baknina_TC_2018}
A.~Baknina and S.~Ulukus, ``Energy harvesting multiple access channels: Optimal
  and near-optimal online policies,'' \emph{{IEEE} Trans. Commun.}, vol.~66,
  no.~7, pp. 2904 -- 2917, Jul. 2018.

\bibitem{Bertesekas-eta-al-2014}
D.~P. Bertsekas, \emph{Dynamic Programming and Optimal Control}.\hskip 1em plus
  0.5em minus 0.4em\relax Athena Scientific, 2017, vol.~II,
  \url{http://www.athenasc.com/dpbook.html}.

\bibitem{Blasco_TWC_April2013}
P.~Blasco, D.~Gunduz, and M.~Dohler, ``A learning theoretic approach to energy
  harvesting communication system optimization,'' \emph{{IEEE} Trans. Wireless
  Commun.}, vol.~12, no.~4, pp. 1872--1882, Apr. 2013.

\bibitem{Ortiz_ICASSP_2018}
A.~Ortiz, T.~Weber, and A.~Klein, ``A two-layer reinforcement learning solution
  for energy harvesting data dissemination scenarios,'' in \emph{Proc.\
  {ICASSP}}, Apr. 2018, pp. 6648--6652.

\bibitem{Nikhilesh_ArXiv_2018}
\BIBentryALTinterwordspacing
N.~Sharma, N.~Mastronarde, and J.~Chakareski, ``Accelerated structure-aware
  reinforcement learning for delay-sensitive energy harvesting wireless
  sensors,'' 2018. [Online]. Available: \url{arXiv:1807.08315v1}
\BIBentrySTDinterwordspacing

\bibitem{wu_icc_2017}
K.~Wu, C.~Tellambura, and H.~Jiang, ``Optimal transmission policy in energy
  harvesting wireless communications: A learning approach,'' in \emph{Proc.\
  IEEE {ICC}}, May 2017, pp. 1--6.

\bibitem{Toorchi_ICIP_sept2016}
N.~Toorchi, J.~Chakareski, and N.~Mastronarde, ``Fast and low-complexity
  reinforcement learning for delay-sensitive energy harvesting wireless visual
  sensing systems,'' in \emph{Proc.\ IEEE Int.\ Conf.\ Image Process. (ICIP)},
  Sep. 2016, pp. 1804--1808.

\bibitem{chu_JIoT_2018}
M.~Chu, H.~Li, X.~Liao, and S.~Cui, ``Reinforcement learning based multi-access
  control and battery prediction with energy harvesting in {I}o{T} systems,''
  \emph{{IEEE} J. Internet Things}, vol.~6, no.~2, pp. 2009--2020, Apr. 2019.

\bibitem{Masadeh_icc_2018}
A.~Masadeh, Z.~Wang, and A.~E. Kamal, ``Reinforcement learning exploration
  algorithms for energy harvesting communications systems,'' in \emph{Proc.\
  IEEE {ICC}}, May 2018, pp. 1--6.

\bibitem{Wei_TWC_Jan2018}
Y.~Wei, F.~R. Yu, M.~Song, and Z.~Han, ``User scheduling and resource
  allocation in hetnets with hybrid energy supply: An actor-critic
  reinforcement learning approach,'' \emph{{IEEE} Trans. Wireless Commun.},
  vol.~17, no.~1, pp. 680--692, Jan. 2018.

\bibitem{xiao_icc_June2015}
Y.~Xiao, Z.~Han, D.~Niyato, and C.~Yuen, ``Bayesian reinforcement learning for
  energy harvesting communication systems with uncertainty,'' in \emph{Proc.\
  IEEE {ICC}}, Jun. 2015, pp. 5398--5403.

\bibitem{Huang_CDC_2015}
L.~Huang, ``Fast-convergent learning-aided control in energy harvesting
  networks,'' in \emph{Proc. of IEEE Conf.\ Dec.\ and Control (CDC)}, Dec.
  2015, pp. 5518--5525.

\bibitem{Yu_arxiv_2018}
\BIBentryALTinterwordspacing
H.~Yu and M.~J. Neely, ``Learning aided optimization for energy harvesting
  devices with outdated state information,'' 2018. [Online]. Available:
  \url{arXiv:1801.03572v1}
\BIBentrySTDinterwordspacing

\bibitem{Gatzianas_TWC_Feb2010}
M.~Gatzianas, L.~Georgiadis, and L.~Tassiulas, ``Control of wireless networks
  with rechargeable batteries,'' \emph{{IEEE} Trans. Wireless Commun.}, vol.~9,
  no.~2, pp. 581--593, Feb. 2010.

\bibitem{Miozzo_wcnc_2017}
M.~Miozzo, L.~Giupponi, M.~Rossi, and P.~Dini, ``Switch-on/off policies for
  energy harvesting small cells through distributed {Q}-learning,'' in
  \emph{Proc.\ {WCNC}}, Mar. 2017, pp. 1--6.

\bibitem{Wang_ICC_2018}
D.~Wang, W.~Wang, Z.~Zhang, and A.~Huang, ``Delay-optimal random access for
  large-scale energy harvesting networks,'' in \emph{Proc.\ IEEE {ICC}}, May
  2018, pp. 1--6.

\bibitem{Ortiz_ArXiv_2017}
\BIBentryALTinterwordspacing
A.~Ortiz, H.~Al-Shatri, T.~Weber, and A.~Klein, ``Multi-agent reinforcement
  learning for energy harvesting two-hop communications with full
  cooperation,'' 2017. [Online]. Available: \url{arXiv:1702.06185v1}
\BIBentrySTDinterwordspacing

\bibitem{Hakami_TVT_June2017}
V.~Hakami and M.~Dehghan, ``Distributed power control for delay optimization in
  energy harvesting cooperative relay networks,'' \emph{{IEEE} Trans. Veh.
  Technol.}, vol.~66, no.~6, pp. 4742--4755, Jun. 2017.

\bibitem{Lowe_NIPS2017}
\BIBentryALTinterwordspacing
R.~Lowe, Y.~Wu, A.~Tamar, J.~Harb, P.~Abbeel, and I.~Mordatch, ``Multi-agent
  actor-critic for mixed cooperative-competitive environments,'' in \emph{Proc.
  of Conf.\ Neural\ Inf.\ Process.\ Syst. (NIPS)}, 2017. [Online]. Available:
  \url{arXiv:1706.02275v3}
\BIBentrySTDinterwordspacing

\bibitem{Hanif_ACM_Feb2016}
A.~F. Hanif, H.~Tembine, M.~Assaad, and D.~Zeghlache, ``Mean-field games for
  resource sharing in cloud-based networks,'' \emph{{IEEE/ACM} Trans. Netw.},
  vol.~24, no.~1, pp. 624--637, Feb. 2016.

\bibitem{Larranaga_isit_june2018}
M.~Larranaga, M.~Assaad, and K.~DeTurck, ``Queue-aware energy efficient control
  for dense wireless networks,'' in \emph{Proc.\ IEEE Int.\ Symp.\ Inf.\
  Theory}, June 2018, pp. 1570--1574.

\bibitem{Wang_TWC_MAr2014}
Y.~{Wang}, F.~R. {Yu}, H.~{Tang}, and M.~{Huang}, ``A mean field game theoretic
  approach for security enhancements in mobile ad hoc networks,'' \emph{{IEEE}
  Trans. Wireless Commun.}, vol.~13, no.~3, pp. 1616--1627, Mar. 2014.

\bibitem{Yang_ISIT_2015}
J.~Yang and J.~Wu, ``Online throughput maximization in an energy harvesting
  multiple access channel with fading,'' in \emph{Proc.\ IEEE Int.\ Symp.\
  Inf.\ Theory}, Jun. 2015, pp. 2727--2731.

\bibitem{Blasco_Mar_JSAC2015}
P.~Blasco and D.~G{\"u}nd{\"u}z, ``Multi-access communications with energy
  harvesting: A multi-armed bandit model and the optimality of the myopic
  policy,'' \emph{{IEEE} J. Sel. Areas Commun.}, vol.~33, no.~3, pp. 585--597,
  Mar. 2015.

\bibitem{Zappone_TC_Oct2019}
A.~{Zappone}, M.~{Di Renzo}, and M.~{Debbah}, ``Wireless networks design in the
  era of deep learning: Model-based, ai-based, or both?'' \emph{{IEEE} Trans.
  Commun.}, vol.~67, no.~10, pp. 7331--7376, Oct. 2019.

\bibitem{Puterman_MDP_2014}
M.~L. Puterman, \emph{Markov Decision Processes: Discrete Stochastic Dynamic
  Programming}.\hskip 1em plus 0.5em minus 0.4em\relax John Wiley \& Sons,
  Inc., 2017.

\bibitem{Goodfellow-et-al-2016}
I.~Goodfellow, Y.~Bengio, and A.~Courville, \emph{Deep Learning}.\hskip 1em
  plus 0.5em minus 0.4em\relax MIT Press, 2016,
  \url{http://www.deeplearningbook.org}.

\bibitem{GOMES_JMPA_2010}
D.~A. Gomes, J.~Mohr, and R.~R. Souza, ``Discrete time, finite state space mean
  field games,'' \emph{Journal de Math{\'e}matiques Pures et Appliqu{\'e}es},
  vol.~93, no.~3, pp. 308 -- 328, Mar. 2010.

\bibitem{SHadikhanloo_PhDTheis_Jan2018}
\BIBentryALTinterwordspacing
S.~Hadikhanloo, ``Learning in mean field games,'' Ph.D. dissertation,
  Universit\'e Paris-Dauphine, Paris, France, Jan. 2018. [Online]. Available:
  \url{http://www.cmap.polytechnique.fr/~saeed.hadikhanloo/PhD_Thesis.pdf}
\BIBentrySTDinterwordspacing

\bibitem{Sunehag_ICAAMS_2018}
P.~Sunehag, G.~Lever, A.~Gruslys, W.~M. Czarnecki, V.~Zambaldi, M.~Jaderberg,
  M.~Lanctot, N.~Sonnerat, J.~Z. Leibo, K.~Tuyls, and T.~Graepel,
  ``Value-decomposition networks for cooperative multi-agent learning based on
  team reward,'' in \emph{Proc.\ 17th Int.\ Conf.\ on Autonomous Agents and
  MultiAgent Syst.}, 2018.

\bibitem{Mnih2015_DQN}
\BIBentryALTinterwordspacing
V.~Mnih, K.~Kavukcuoglu, D.~Silver, A.~A. Rusu, J.~Veness, M.~G. Bellemare,
  A.~Graves, M.~Riedmiller, A.~K. Fidjeland, G.~Ostrovski, S.~Petersen,
  C.~Beattie, A.~Sadik, I.~Antonoglou, H.~King, D.~Kumaran, D.~Wierstra,
  S.~Legg, and D.~Hassabis, ``Human-level control through deep reinforcement
  learning,'' \emph{Nature}, vol. 518, pp. 529--533, Feb 2015. [Online].
  Available: \url{http://dx.doi.org/10.1038/nature14236}
\BIBentrySTDinterwordspacing

\bibitem{Chen_sysML_feb2018}
Y.-H. Chen, T.-J. Yang, J.~Emer, and V.~Sze, ``Understanding the limitations of
  existing energy-efficient design approaches for deep neural networks,'' in
  \emph{SysML Conf.}, Feb. 2018, pp. 1--3.

\bibitem{Yang_Asilomar_Oct2017}
T.~{Yang}, Y.~{Chen}, J.~{Emer}, and V.~{Sze}, ``A method to estimate the
  energy consumption of deep neural networks,'' in \emph{Proc.\ 51st Asilomar
  Conf.\ on Signals, Syst., and Comput.}, Oct. 2017, pp. 1916--1920.

\bibitem{Sze_Proc_of_IEEE_Dec2017}
V.~{Sze}, Y.~{Chen}, T.~{Yang}, and J.~S. {Emer}, ``Efficient processing of
  deep neural networks: A tutorial and survey,'' \emph{Proc. of the IEEE}, vol.
  105, no.~12, pp. 2295--2329, Dec. 2017.

\bibitem{Ye_Arxiv_2018}
\BIBentryALTinterwordspacing
S.~Ye, T.~Zhang, K.~Zhang1, J.~Li, K.~Xu, Y.~Yang, F.~Yu, J.~Tang, M.~Fardad,
  S.~Liu, X.~Chen, X.~Lin, and Y.~Wang, ``Progressive weight pruning of deep
  neural networks using {ADMM},'' \emph{CoRR}, vol. abs/1810.07378v2, 2018.
  [Online]. Available: \url{arXiv:1810.07378v2}
\BIBentrySTDinterwordspacing

\bibitem{Yiwen_NIPS_2016}
Y.~Guo, A.~Yao, and Y.~Chen, ``Dynamic network surgery for efficient {DNN}s,''
  in \emph{Advances in Neural Inf. Process. Syst.}, 2016, pp. 1379--1387.

\end{thebibliography}
\end{document}